\documentclass{article}

\usepackage[preprint]{icml2026}

\usepackage[utf8]{inputenc}
\usepackage[T1]{fontenc}
\usepackage{kotex}

\usepackage{microtype}
\usepackage{graphicx}
\usepackage{subcaption}
\usepackage{booktabs}
\usepackage{url}
\usepackage{amsfonts}
\usepackage{nicefrac}
\usepackage{makecell}
\usepackage[table]{xcolor}
\usepackage{enumitem}
\usepackage{caption}
\usepackage{wrapfig}
\usepackage{array}
\usepackage[most]{tcolorbox}
\usepackage{listings}
\usepackage{xspace}
\usepackage{pifont}

\usepackage{amsmath}
\usepackage{amssymb}
\usepackage{mathtools}
\usepackage{amsthm}
\usepackage[framemethod=TikZ]{mdframed}

\usepackage{hyperref}
\usepackage[capitalize,noabbrev]{cleveref}

\usepackage[textsize=tiny]{todonotes}
\usepackage{placeins}
\usepackage{fancyhdr}
\usepackage{placeins}

\makeatletter
\let\ICMLorigPrintAffiliationsAndNotice\printAffiliationsAndNotice
\renewcommand{\printAffiliationsAndNotice}[1]{%
  \begingroup
  \long\def\@footnotetext##1{}%
  \long\def\@mpfootnotetext##1{}%
  \ICMLorigPrintAffiliationsAndNotice{#1}%
  \endgroup
}
\makeatother

\definecolor{algbg}{HTML}{ECECFF}
\definecolor{algframe}{HTML}{9999CC}
\definecolor{lightgray}{cmyk}{0,0,0,.05}

\newtcolorbox{promptbox}[1]{
  enhanced,
  breakable,
  colback=gray!10,
  colframe=gray!55,
  coltitle=white,
  colbacktitle=gray!55,
  title=\textbf{#1},
  fonttitle=\bfseries,
  fontupper=\footnotesize,
  arc=2mm,
  boxrule=0.8pt,
  left=2mm,
  right=2mm,
  top=2mm,
  bottom=2mm,
  before skip=6pt,
  after skip=6pt,
  pad at break*=2mm
}

\newtcolorbox{contributionbox}{
  enhanced,
  colback=cyan!3,
  colframe=cyan!3,
  borderline west={1.2pt}{0pt}{cyan!60!black},
  sharp corners,
  boxrule=0pt,
  left=5pt,
  right=4pt,
  top=4pt,
  bottom=4pt
}

\newtcolorbox{namedbox}[2][]{enhanced,breakable,
  colback=cyan!4,
  colframe=cyan!65!black,
  boxed title style={colback=cyan!65!black, colframe=cyan!65!black},
  coltitle=white,
  fonttitle=\small,
  title={#2},
  attach boxed title to top left={xshift=1.0mm, yshift*=-\tcboxedtitleheight/2},
  boxsep=0.1pt,
  left=2pt,
  right=2pt,
  top=4pt,
  bottom=4pt,
  arc=3pt,
  #1
}

\newcolumntype{C}[1]{>{\centering\arraybackslash}p{#1}}

\newcommand{\proposed}{\texttt{ShaQ}\xspace}

\newlength\GreyboxOuterVspace
\newlength\GreyboxPadding
\mdfdefinestyle{mathblockframe}{
  linecolor=gray!40,
  linewidth=0.5pt,
  backgroundcolor=gray!8,
  roundcorner=3pt,
  innertopmargin=3pt,
  innerbottommargin=3pt,
  innerleftmargin=3pt,
  innerrightmargin=3pt,
  skipabove=3pt,
  skipbelow=3pt
}

\newmdtheoremenv[style=mathblockframe]{theorem}{Theorem}[section]
\newmdtheoremenv[style=mathblockframe]{proposition}{Proposition}[section]
\newmdtheoremenv[style=mathblockframe]{lemma}{Lemma}[section]
\newmdtheoremenv[style=mathblockframe]{corollary}{Corollary}[section]
\newmdtheoremenv[style=mathblockframe]{property}{Property}[section]

\theoremstyle{definition}
\newmdtheoremenv[style=mathblockframe]{definition}{Definition}[section]
\newmdtheoremenv[style=mathblockframe]{assumption}{Assumption}[section]
\newmdtheoremenv[style=mathblockframe]{remark}{Remark}[section]

\begin{document}

\onecolumn


\pagestyle{plain}
\thispagestyle{plain}

\begin{center}
{\LARGE \bfseries
Localizing Input Uncertainty Quantification for\\
Large Language Models via Shapley Values
\par}

\vspace{0.5cm}

{\large
Seongjun Lee\textsuperscript{1,*}
\quad
Suwan Yoon\textsuperscript{1,*}
\quad
Changhee Lee\textsuperscript{1,\textdagger}
\par}

{\small
\textsuperscript{1}Department of Artificial Intelligence, Korea University\\
\texttt{\{pyoung7307,suwanyoon,changheelee\}@korea.ac.kr}
\par}
\end{center}

\begingroup
\renewcommand{\thefootnote}{\fnsymbol{footnote}}
\footnotetext[1]{Equal contribution. \quad \textsuperscript{\textdagger}Corresponding author.}
\endgroup

\vspace{0.3in}

\begin{abstract}
    As large language models (LLMs) are increasingly integrated into high-stakes decision-making, the ability to reliably quantify uncertainty has become a critical requirement for safety and trust. However, current uncertainty quantification methods primarily operate at the output level, often failing to distinguish whether uncertainty arises from the model’s lack of knowledge or from ambiguity in the user’s input. While input-centric uncertainty quantification has recently emerged as a promising direction, it remains relatively underexplored and typically relies on coarse, input-level information. Consequently, users are provided with scalar uncertainty scores that offer little actionable guidance on \textit{which parts of the input} should be clarified to improve reliability. To address this limitation, we propose \textbf{Sha}pley-based input uncertainty \textbf{Q}uantification (\proposed), a framework for \textit{span-level attribution} of input-induced uncertainty. Our approach models ambiguous spans in the input as players in a cooperative game and quantifies their contributions using Shapley values, defined via the weighted average of marginal reductions in conditional entropy obtained by clarifying each span coalition. Unlike existing input-level approaches, our formulation explicitly captures complex interactions among spans and provides a principled decomposition in which individual attributions sum exactly to the total input-induced uncertainty. We evaluate \proposed on the AmbigQA and AmbiEnt benchmarks, where it achieves state-of-the-art performance in ambiguity detection. We further demonstrate its practical utility on a MediTOD benchmark, showing that \proposed can precisely localize under-specified clinical utterances, facilitating more effective human-AI collaboration in high-stakes settings. Overall, our results show that \proposed not only improves uncertainty estimation but also provides actionable insights for targeted input clarification. Codes are available \textcolor{blue}{\underline{\href{https://anonymous.4open.science/r/ShaQ-0E39/README.md}{here}}}.
\end{abstract}
\printAffiliationsAndNotice{}
\FloatBarrier
\section{Introduction}\label{sec:intro}
\begin{figure}[t]
\centering
\includegraphics[width=0.80\linewidth]{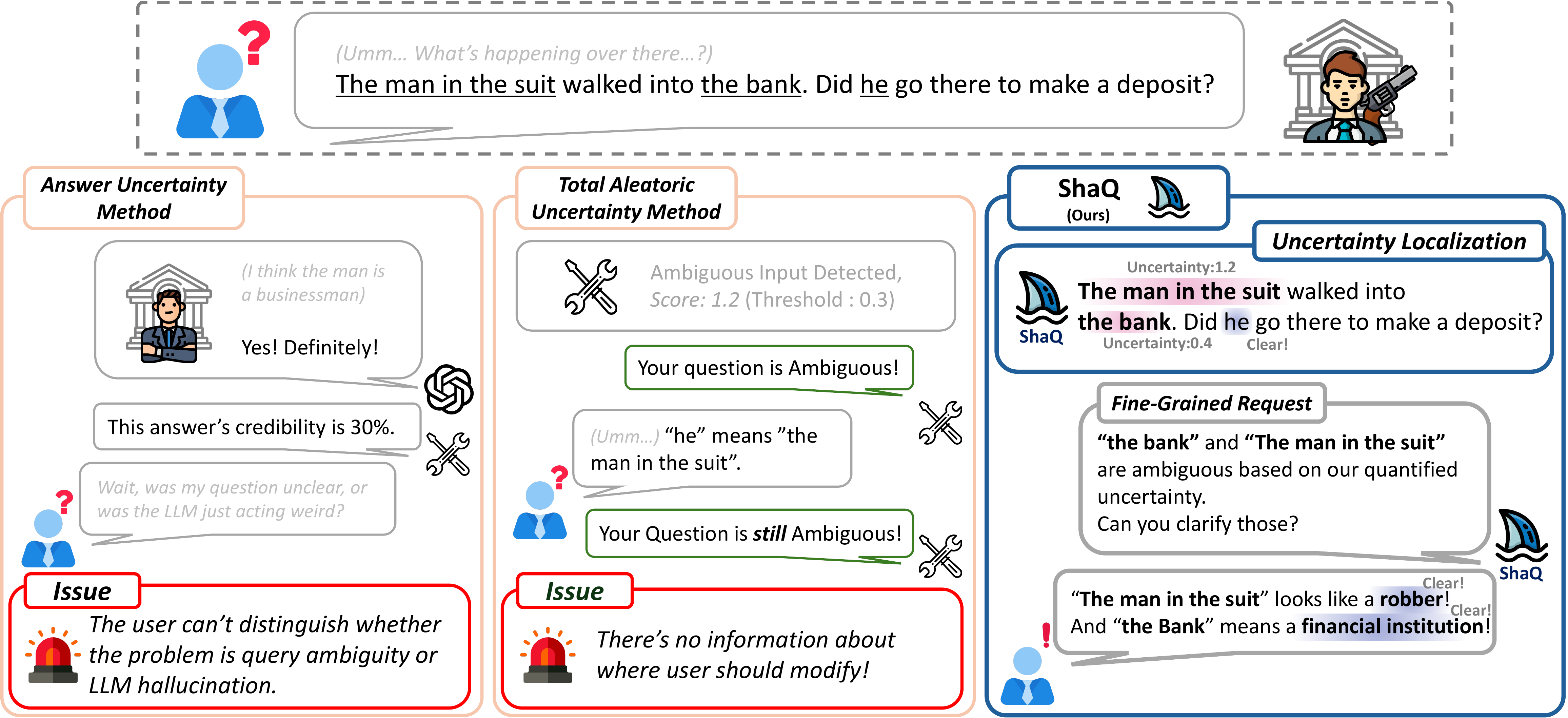} \vspace{-1mm}
\caption{\textbf{Comparison of uncertainty methods to LLM-user interaction.} (\textbf{Left}) Output-level uncertainty methods provide confidence scores, but do not distinguish whether uncertainty arises from input ambiguity or model hallucination. (\textbf{Center}) Aggregate aleatoric uncertainty methods identify ambiguous inputs, but lack localization. 
(\textbf{Right}) \proposed\ localizes uncertainty to individual spans, enabling targeted clarification of the specific sources of ambiguity.}
\label{fig:motivation}
\vspace{-5mm}
\end{figure}

Large language models (LLMs) have demonstrated strong zero-shot performance across a wide range of tasks and are increasingly deployed as decision-support tools in many real-world settings~\citep{llmzeroshot_kojima_nips, llmfewshot_brown_nips, humancentered_devic_arxiv, HumanDecisionMaking_marusich_arxiv}. However, they often produce confident responses even when the input is ambiguous or under-specified~\citep{underspecifiedChat_herlihy_arxiv, clam_kuhn_icmlworkshop}, making it critical to quantify the uncertainty in their outputs for trustworthy deployment.
This uncertainty can be formalized within the classical uncertainty quantification (UQ) framework~\citep{weightuncertainty_blundell_icml,dropout_gal_icml,heteroscadistiy_kendall_nips,priornet_malinin_nips,deepensemble_lakshiminarayana_nips,deepensembleloss_fort_arxiv}, which decomposes predictive uncertainty into \textit{epistemic uncertainty} -- stemming from incomplete knowledge of model parameters -- and \textit{aleatoric uncertainty} -- arising from ambiguity or noise intrinsic to the input~\citep{uncertaintyDL_gal_cambridge}. Building on this framework, a growing body of work has focused on estimating uncertainty in LLM outputs using signals such as token-level likelihoods, semantic consistency across sampled responses, or model parameter perturbations~\citep{autoregressiveUE_malinin_arxiv,tokur_zhang_arxiv,spuq_gao_acl,SemanticUncertainty_kuhn_iclr,cocoa_vashurin_nips,uqconf_lin_tmlr}. 
While these approaches have proven effective for detecting hallucinations and confabulations~\citep{selfcheckgpt_manakul_emnlp,detectHallucination_farquhar_nature}, they remain fundamentally limited: a scalar uncertainty score indicates that an output is uncertain, but provides no actionable insight into \textit{which part of the input} should be revised to resolve that ambiguity.

This limitation highlights a fundamental gap that reducing uncertainty ultimately requires understanding its origin within the input itself. However, input-centric uncertainty analysis remains largely underexplored~\citep{uqsurvey_liu_kdd}. The need for such analysis is especially pronounced in practical deployments, where LLMs are accessed via black-box APIs and modifying model parameters to reduce epistemic uncertainty is infeasible. In these settings, improving reliability must instead rely on resolving the \textit{aleatoric uncertainty} inherent in the input. Yet, existing approaches to input-level uncertainty provide only an aggregate score for the entire input~\citep{ICE_hou_arxiv}, offering little guidance on which parts of the input contribute to the uncertainty. As illustrated in Figure~\ref{fig:motivation}, the query \textit{``The man in the suit walked into the bank. Did he go there to make a deposit?''} contains multiple ambiguous expressions: \textit{the man} may refer to either a \textit{robber} or a \textit{customer}, while \textit{the bank} may denote a \textit{financial institution} or a \textit{riverbank}. These ambiguous spans can interact in complex ways, further compounding uncertainty in the model’s predictions. Without localization, users are often forced to clarify the entire input without knowing where ambiguity originates, leading to unnecessary effort and limited practical benefit. 

To address this gap, we propose \textit{\textbf{Sha}pley-based input uncertainty \textbf{Q}uantification} (\proposed), a framework that localizes input-induced uncertainty to individual ambiguous spans within an input. \proposed operates by resolving each span's ambiguity through \textit{premises} -- declarative statements that disambiguate a span's semantic role -- and measuring the consequent reduction in predictive entropy. The fundamental challenge, as illustrated in our running example, is that ambiguous spans are often interdependent: the uncertainty resolved by one span is inherently conditioned on which other spans have already been clarified. This complex interaction structure motivates a Shapley formulation, where ambiguous spans act as \textit{players} and an \textit{information-theoretic} value function defines their coalitional utility in a cooperative game. To compute these values, we introduce a \textit{bottom-up marginalization} algorithm that evaluates marginal contributions across all possible coalitions while provably guaranteeing non-negative uncertainty estimates. The resulting framework provides a principled and fine-grained decomposition of input-induced uncertainty, identifying \textit{where} ambiguity arises and enabling users to focus their clarification efforts precisely where they are most needed.

\begin{contributionbox}
\textbf{Our contributions are summarized as follows:} \vspace{-1mm}
\begin{itemize}[leftmargin=1.5em]
    \item We propose \proposed, a novel Shapley-based framework for localizing the input-induced uncertainty of LLM inputs at the span level.
    \item We theoretically show that \proposed generalizes existing input-level uncertainty quantification methods~\citep{ICE_hou_arxiv}, providing a principled decomposition into span-level contributions.
    \item We conduct experiments on AmbigQA~\citep{ambigqa_sewon_emnlp} and AmbiEnt~\citep{ambient-liu-etal-2023-afraid}, where \proposed achieves state-of-the-art ambiguity detection, and demonstrate its practical utility on MediTOD~\citep{meditod_saley_emnlp} in real-world medical dialogues.
\end{itemize}
\end{contributionbox}

\section{Related Works}\label{sec:rel}
\textbf{Uncertainty Quantification.}~~
Early work on uncertainty quantification (UQ) focused primarily on classification settings, estimating predictive uncertainty via posterior approximations, dropout, ensembles, or architectural modifications~\citep{dropout_gal_icml,deepensemble_lakshiminarayana_nips,deepensembleloss_fort_arxiv,heteroscadistiy_kendall_nips,priornet_malinin_nips}. However, these approaches do not transfer readily to modern LLMs, which involve autoregressive generation, massive parameter scales, and black-box access. Consequently, recent LLM UQ research has pursued several alternative directions: exploiting token likelihoods or output entropy~\citep{autoregressiveUE_malinin_arxiv}, measuring semantic consistency across multiple sampled outputs~\citep{SemanticUncertainty_kuhn_iclr,detectHallucination_farquhar_nature,cocoa_vashurin_nips,uqconf_lin_tmlr}, assessing prediction stability under input or model perturbations~\citep{spuq_gao_acl,tokur_zhang_arxiv}, or eliciting verbalized confidence scores directly from the model~\citep{ask4conf_tian_emnlp}. Our framework departs from this perspective: rather than quantifying uncertainty in a generated answer, we localize and quantify the sources of ambiguity that cause an input to admit multiple valid answers.

\noindent
\textbf{Shapley Values for Explanation and Uncertainty.}~~
Shapley values originate from cooperative game theory as a principled solution for fairly distributing a coalition's payoff among players~\citep{shapleyvalue_shapley_american}. SHAP introduced this framework to machine learning by attributing a model's prediction to individual input features~\citep{SHAP_lundberg_nips,shapAlgorithm_suinlee_nature}. Recent work extends this idea to uncertainty attribution by replacing the payoff with information-theoretic quantities such as predictive or conditional entropy~\citep{infoshap_watson_nips}, but remains confined to supervised settings with fixed feature spaces and tractable predictive distributions. How to define and estimate Shapley values for uncertainty quantification in LLMs -- accounting for black-box access, semantic equivalence, and free-form generation -- remains underexplored.
\vspace{-0.5em}
\section{Problem Formulation}\label{sec:problem}

\textbf{Notation.}~~
Let $\mathcal{D}$ denote an observed dataset $\mathcal{D}$ composed of inputs $X \in \mathcal{X}$ and corresponding outputs $Y \in \mathcal{Y}$, respectively. 
Following standard conventions, we use uppercase letters for random variables and lowercase letters for their realizations. For notational convenience, we omit the explicit conditioning on $\mathcal{D}$ and $\theta$ when it is clear from context.

\subsection{Input Uncertainty Quantification in LLMs}
A model's predictive uncertainty is conventionally decomposed into \emph{epistemic} uncertainty, stemming from incomplete knowledge of model parameters, and \emph{aleatoric} uncertainty, arising from ambiguity intrinsic to the input~\citep{uncertaintyDL_gal_cambridge}. Reducing epistemic uncertainty requires access to model parameters, which is infeasible in black-box LLM deployments.
To circumvent these limitations, recent work shifts the source of uncertainty from the parameter space to the input space~\citep{ICE_hou_arxiv}. Specifically, the model parameters are fixed to $\theta$, and uncertainty is instead characterized through a latent variable $C$ that represents plausible clarifications of an ambiguous or under-specified input $x$.
The predictive uncertainty for a given input $x$ is then quantified by the entropy $\mathcal{H}(Y \mid x)$, that can be decomposed as
\begin{equation}\label{eq:input_decomp}
    \mathcal{H}(Y \mid x) = \underbrace{\mathcal{I}(Y;C \mid x)}_{\text{input-induced uncertainty}} + \underbrace{\mathbb{E}_{c \sim q(C \mid x)} 
    \bigl[\mathcal{H}(Y \mid x, c)\bigr]}_{\text{remaining uncertainty}},
\end{equation}
where $q(C \mid x)$ denotes an approximate distribution over plausible clarifications given $x$. The mutual information term in \eqref{eq:input_decomp} captures the uncertainty induced by input ambiguity, and following previous work \citep{ICE_hou_arxiv}, we treat it as an estimate of the \emph{aleatoric uncertainty} of the input.
The expected conditional entropy term quantifies the residual uncertainty after clarification, and under the assumption that input ambiguity is the primary source of aleatoric uncertainty, it serves as a proxy for the model's irreducible uncertainty. 
However, \eqref{eq:input_decomp} treats aleatoric uncertainty as a monolithic property of the entire input, offering no means to attribute individual contributions of each input component.


\subsection{Localizing Aleatoric Uncertainty}
In this work, we aim to localize aleatoric uncertainty at a finer granularity by attributing it to specific components of the input. A central challenge is that, unlike structured data with predefined feature boundaries, natural language lacks a canonical decomposition into non-overlapping semantic units. Consequently, quantifying how different parts of the input contribute to overall uncertainty first requires identifying the units over which ambiguity arises.

Importantly, ambiguity is often context-dependent: a model’s prediction can vary depending on how multiple phrases are jointly interpreted. This indicates that uncertainty is not solely a property of individual terms, but also emerges from interactions among them. To capture these dynamics, we focus on contiguous phrases that form the minimal, semantically coherent units capable of supporting multiple plausible interpretations. We refer to these functional units as \emph{ambiguous spans}:
\begin{definition}[\textbf{Ambiguous Span}]\label{def:span}
An \emph{ambiguous span} is a contiguous subsequence of the input sequence that constitutes a minimal semantically coherent unit admitting multiple plausible interpretations. We denote by $\mathcal{S} = \{s_1, \ldots, s_n\}$ a collection of non-overlapping ambiguous spans extracted from $x$.
\end{definition} \vspace{-1.5mm}
Given an ambiguous span collection $\mathcal{S} = \{s_1, \ldots, s_n\}$, we associate each span $s_k$ with a \textit{clarification} random variable $C_k$, whose distribution $q(C_k \mid x)$ characterizes the set of plausible interpretations of $s_k$. We denote the joint clarification as $C = (C_1, \dots, C_n)$ and the joint clarification excluding the $k$-th span as $C_{\setminus k}$. We use $c$, $c_k$, and $c_{\setminus k}$ to denote corresponding realizations, respectively.

A natural approach to quantifying the contribution of a span $s_k$ is to measure how much information about $Y$ is provided by its clarification $C_k$ once all other ambiguities have been resolved. We formalize this intuition using a leave-one-out (LOO) attribution:
\begin{definition}[\textbf{LOO Span Attribution}]\label{def:loo}
The LOO attribution of span $s_k$ is defined as the conditional mutual information
\begin{equation} \label{eq:span_loo} 
\phi^{\mathrm{LOO}}_k \;\triangleq\; \mathcal{I}(Y; C_k \mid x, C_{\setminus k}) = \mathcal{H}(Y \mid x, C_{\setminus k}) - \mathcal{H}(Y \mid x, C),
\end{equation}
which quantifies the information about $Y$ contributed by clarifying $s_k$ after all remaining spans have been resolved.
\end{definition}\vspace{-1.5mm}
Here, the second equality expresses the attribution as the difference between two conditional entropies. This form is particularly useful for estimation, as it allows us to approximate $\phi^{\mathrm{LOO}}_k$ by sampling model outputs under partially clarified (i.e., conditioned on $C_{\setminus k}$) and fully clarified (i.e., conditioned on $C$) inputs. We provide the derivation in Appendix~\ref{sec:appendix_proofs}.

\textbf{Challenges.}~~
While Definition \ref{def:loo} provides a principled per-span attribution, it evaluates each span only in the setting where all other spans have already been resolved. When spans are semantically dependent, conditioning on $C_{\setminus k}$ can implicitly constrain the plausible interpretations of $s_k$. In our running example, clarifying \textit{``the man in the suit''} as a bank robber effectively eliminates the riverbank interpretation of \textit{``the bank''}, thereby reducing the uncertainty originally associated with that span. Such dependencies hinder accurate measurement of how much each span contributes to the total aleatoric uncertainty, and thus the sum of LOO attributions does not, in general, equal the total aleatoric uncertainty $\mathcal{I}(Y; C \mid x)$. To address this limitation, we introduce a \textit{cooperative game formulation} based on Shapley values~\cite{shapleyvalue_shapley_american}. This approach accounts for the marginal contribution of each span across all possible subsets (coalitions) of clarifications, providing a fair allocation of uncertainty that satisfies the efficiency axiom~\cite{interpretableML_molnar}.
\vspace{-0.5em}

\section{The \proposed Framework} \label{sec:method}

\subsection{Uncertainty Localization via Shapley Values}
We formulate uncertainty localization as a cooperative game in which each ambiguous span $s_k \in \mathcal{S}$ acts as a player. The goal is to fairly distribute the total ``payoff'' -- defined as the total aleatoric uncertainty $\mathcal{I}(Y; C \mid x)$ -- across these spans. Let $\mathcal{N} = \{1, \ldots, n\}$ be the set of indices corresponding to the spans in $\mathcal{S}$. We specify the game by defining a value function over coalitions of spans $\mathcal{A} \subseteq \mathcal{N}$.

\begin{definition}[\textbf{Value Function}]\label{def:value_function}
The value function $v: 2^N \to \mathbb{R}$ assigns to each coalition $\mathcal{A} \subseteq \mathcal{N}$ the mutual information between $Y$ and the joint clarifications of the spans indexed by $\mathcal{A}$:
\begin{equation} \label{eq:value_function}
    v(\mathcal{A}) = \mathcal{I}(Y; C_\mathcal{A} \mid x) = \mathcal{H}(Y \mid x) - \mathbb{E}_{c_\mathcal{A} \sim q(C_{\mathcal{A}} \mid x)}\bigl[\mathcal{H}(Y \mid x, c_\mathcal{A})\bigr],
\end{equation}
where $C_\mathcal{A} = \big(C_j : j \in \mathcal{A} \big)$ denotes the joint clarifications associated with the spans in $\mathcal{A}$. By convention, we have $v(\emptyset) = 0$ and $v(\mathcal{N}) = \mathcal{I}(Y; C \mid x)$.
\end{definition}

Based on the value function in \eqref{eq:value_function}, we define the \textit{uncertainty attribution} for each span $s_k$ using the Shapley value $\phi_k$, which is defined as the weighted average of its marginal contributions $v(\mathcal{A} \cup \{k\}) - v(\mathcal{A})$ over all possible coalitions $\mathcal{A} \subseteq \mathcal{N} \setminus \{k\}$~\citep{SHAP_lundberg_nips}. The following theorem expresses these marginal contributions in terms of conditional mutual information:

\begin{theorem}[\textbf{Shapley Value of Ambiguous Span}]\label{thm:shapley}
For any coalition $\mathcal{A} \subseteq \mathcal{N} \setminus \{k\}$, the marginal contribution of span $s_k$ satisfies
\begin{equation}\label{eq:marginal}
    v(\mathcal{A} \cup \{k\}) - v(\mathcal{A}) \;=\; \mathcal{I}(Y; C_k \mid x, C_{\mathcal{A}}).
\end{equation}
Consequently, the Shapley value of span $s_k$ can be written as
\begin{equation}\label{eq:shapley_mi}
    \phi_k \;=\; \sum_{\mathcal{A} \subseteq \mathcal{N} \setminus \{k\}}
    \frac{|\mathcal{A}|!(n-|\mathcal{A}|-1)!}{n!}\,
    \mathcal{I}(Y; C_k \mid x, C_\mathcal{A}), \vspace{-1mm}
\end{equation}
which characterizes $\phi_k$ as the expected information gain regarding $Y$ from clarifying $s_k$, marginalized over all possible clarification contexts $C_{\mathcal{A}}$. 
\end{theorem}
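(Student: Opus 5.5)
The plan is to prove \eqref{eq:marginal} from the chain rule of mutual information, then substitute into the standard Shapley formula. No properties of the LLM or of $q$ are needed beyond everything living on one joint distribution over $(Y, C)$ given $x$. Concretely, take the joint $q(C \mid x)\,p_\theta(Y \mid x, C)$, where $p_\theta(Y \mid x, c_\mathcal{A})$ is understood as the marginal of this joint. The conditional entropies in \eqref{eq:value_function} are then genuine conditional entropies of this distribution. I will state this as a standing assumption, since it is what makes the expectations consistent across different coalitions.

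First, I rewrite the value function in entropy form:
\[
v(\mathcal{A}) = \mathcal{H}(Y \mid x) - \mathcal{H}(Y \mid x, C_\mathcal{A}),
\]
where $\mathcal{H}(Y \mid x, C_\mathcal{A}) = \mathbb{E}_{c_\mathcal{A}}[\mathcal{H}(Y \mid x, c_\mathcal{A})]$. For $\mathcal{A} \subseteq \mathcal{N}\setminus\{k\}$ the $\mathcal{H}(Y\mid x)$ terms cancel, leaving
\[
v(\mathcal{A}\cup\{k\}) - v(\mathcal{A}) = \mathcal{H}(Y \mid x, C_\mathcal{A}) - \mathcal{H}(Y \mid x, C_\mathcal{A}, C_k).
\]
This is exactly the definition of $\mathcal{I}(Y; C_k \mid x, C_\mathcal{A})$; it is the same identity used for the second equality in Definition~\ref{def:loo}. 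Equivalently, one can invoke the chain rule $\mathcal{I}(Y; C_\mathcal{A}, C_k \mid x) = \mathcal{I}(Y; C_\mathcal{A}\mid x) + \mathcal{I}(Y; C_k \mid x, C_\mathcal{A})$. The edge cases need no separate treatment: $\mathcal{A}=\emptyset$ gives $v(\{k\}) = \mathcal{I}(Y;C_k\mid x)$ by the convention $v(\emptyset)=0$, and $\mathcal{A}=\mathcal{N}\setminus\{k\}$ recovers $\phi_k^{\mathrm{LOO}}$.

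Second, the Shapley value is by definition
\[
\phi_k=\sum_{\mathcal{A}\subseteq\mathcal{N}\setminus\{k\}}\frac{|\mathcal{A}|!(n-|\mathcal{A}|-1)!}{n!}\bigl(v(\mathcal{A}\cup\{k\})-v(\mathcal{A})\bigr).
\]
Substituting \eqref{eq:marginal} termwise yields \eqref{eq:shapley_mi}. For the interpretive claim, I note that the weights are nonnegative and sum to one. Grouping coalitions by size $m$, there are $\binom{n-1}{m}$ of them, each weighted $m!(n-m-1)!/n!$, so each size contributes $1/n$, and the $n$ sizes sum to one. The weights therefore form a probability distribution over contexts $C_\mathcal{A}$, namely the distribution of predecessors of $k$ in a uniformly random ordering. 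Hence $\phi_k$ is an expected conditional information gain.

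The main obstacle is the well-posedness point above, not the algebra. The identity $v(\mathcal{A}\cup\{k\})-v(\mathcal{A})=\mathcal{I}(Y;C_k\mid x,C_\mathcal{A})$ requires two consistency conditions. First, $q(C_\mathcal{A}\mid x)$ must be the marginal of $q(C\mid x)$. Second, the predictive $p(Y\mid x,c_\mathcal{A})$ must equal $\mathbb{E}_{c_k\sim q(C_k\mid x,c_\mathcal{A})}p(Y\mid x,c_\mathcal{A},c_k)$, as holds in the bottom-up marginalization construction. If these conditions fail, the difference of expected entropies is still well defined, but it need not be a nonnegative conditional mutual information. I would therefore state both conditions explicitly before the chain-rule step.
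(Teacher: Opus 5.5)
Your proof is correct and follows the paper's argument. Like the paper, you cancel the $\mathcal{H}(Y \mid x)$ terms in $v(\mathcal{A}\cup\{k\}) - v(\mathcal{A})$, recognize the remaining difference of expected conditional entropies as $\mathcal{I}(Y; C_k \mid x, C_{\mathcal{A}})$ by definition, and substitute into the Shapley formula. Two parts of your write-up are worth keeping because the paper leaves them implicit: your explicit consistency assumption (a single joint distribution over $(Y, C)$ given $x$, so that each $p(Y \mid x, c_{\mathcal{A}})$ is the marginal of the finer predictives), and your check that the Shapley weights sum to one.
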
\vspace{-1.5mm}

The derivation of \eqref{eq:marginal} is provided in Appendix~\ref{sec:appendix_proofs}. 
Notably, \eqref{eq:shapley_mi} reveals that the LOO attribution $\phi^{\mathrm{LOO}}_k$ (Definition~\ref{def:loo}) is a specific boundary case of the Shapley value, where the context is restricted to the largest possible coalition $\mathcal{A} = \mathcal{N}\setminus \{k\}$. While LOO evaluates the contribution of a span only in the fully clarified setting, the Shapley formulation accounts for dependencies among spans by averaging their contributions across all clarification contexts. This leads to the following structural result:

\begin{remark}[Efficiency]\label{rem:efficiency}
The Shapley values in \eqref{eq:shapley_mi} partition the total input-induced uncertainty:
\begin{equation} \label{eq:efficiency}
    \sum_{k=1}^{n} \phi_k \;=\; v(\mathcal{N}) \;=\; \mathcal{I}(Y; C \mid x).
\end{equation}
\end{remark}\vspace{-1.5mm}

Remark~\ref{rem:efficiency} shows that our framework satisfies the efficiency axiom~\citep{shapleyvalue_shapley_american,interpretableML_molnar}, ensuring that the sum of \textit{local} span-level attributions exactly recovers the \textit{global} input-level aleatoric uncertainty. Together with Theorem~\ref{thm:shapley}, this establishes our approach as a \textit{generalization} of prior input-level uncertainty estimation~\citep{ICE_hou_arxiv}. In particular, the global quantity $\mathcal{I}(Y; C \mid x)$ introduced in~\citep{ICE_hou_arxiv} is recovered via efficiency as $\sum_k \phi_k = v(\mathcal{N})$, while our Shapley formulation further provides a principled per-span decomposition that localizes the underlying sources of the total aleatoric uncertainty.

\subsection{Practical Estimation of Uncertainty Attribution}
\begin{figure}[!t]
    \centering
    \includegraphics[width=0.95\linewidth]{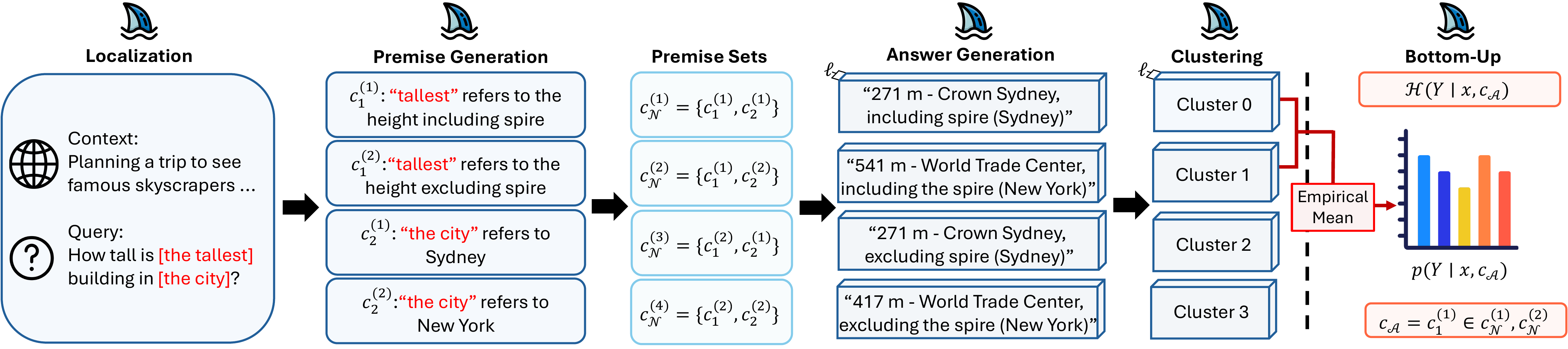} \vspace{-1mm}
    \caption{\textbf{An overview of the \proposed pipeline.} Given an input query \textit{``How tall is the tallest building in the city?''}, \textbf{(Step 1)} two ambiguous spans are localized, and then two premises are generated per span, providing four joint clarifications. For each clarification, \textbf{(Step 2)} the \texttt{Answerer} generates $\ell$ responses, which are then mapped into a shared semantic space by the \texttt{Clusterer}. \textbf{(Step 3)} Bottom-up marginalization is used to compute $\mathcal{H}(Y \mid x, c_\mathcal{A})$ for each premise (here, we set $c_\mathcal{A} = c_1^{(1)}$).}
    \label{fig:shaq}
    \vspace{-2.0mm}
\end{figure}
Estimating the Shapley values defined in \eqref{eq:shapley_mi} reduces to computing the expected conditional entropy $\mathbb{E}_{c_\mathcal{A} \sim q(C_{\mathcal{A}} \mid x)}
[\mathcal{H}(Y \mid x, c_{\mathcal{A}})]$ for all subsets $\mathcal{A} \subseteq \mathcal{N}$.
Our framework employs four specialized LLMs, each assigned a distinct role to facilitate a modular estimation process as in Figure~\ref{fig:shaq}: \vspace{-1mm}
\begin{itemize}[leftmargin=1.5em, itemsep=0pt, topsep=1pt]
    \item \texttt{Localizer}: Extracts the set of ambiguous spans $\mathcal{S}$ within the input $x$.
    \item \texttt{Generator}: Samples logical premises (i.e., clarifications) for each identified ambiguous span.
    \item \texttt{Answerer}: Generates responses $Y$ conditioned on $x$ and joint clarifications for a coalition $\mathcal{A}$.
    \item \texttt{Clusterer}: Maps diverse responses into a unified semantic space for entropy calculation. 
\end{itemize}

The estimation pipeline proceeds through the following three stages: \vspace{-1mm}
\begin{itemize}[leftmargin=1.0em, itemsep=0pt, topsep=1pt]
\item \textbf{Stage 1: Span Localization and Premise Generation.}~
Given an input $x$, we identify a collection of ambiguous spans $\mathcal{S} = \{s_1, \ldots, s_n\}$ using the \texttt{Localizer}. This step reduces the combinatorial space of ambiguity by focusing on a small number of semantically coherent units that can have multiple plausible interpretations. Then, for each identified span $s_k \in \mathcal{S}$, the \texttt{Generator} samples $m$ conditioning premises $\{c_k^{(1)}, \ldots, c_k^{(m)}\}$, where each premise represents a distinct \textit{logical clarification} of that span. We use the term \textit{premise} to emphasize that these clarifications act as grounding statements: by specifying a plausible interpretation of an ambiguous span, a premise defines a concrete semantic context that can alter the meaning of the input.

\item \textbf{Stage 2. Answer Sampling and Semantic Clustering.}~
For each $c_\mathcal{N}$, the \texttt{Answerer} generates $\ell$ responses conditioned on $x$ and $c_\mathcal{N}$. This produces a global pool of answers across all possible joint clarifications. To maintain consistency across subset evaluations, we adopt a \textit{global clustering} strategy, in which the \texttt{Clusterer} performs a single, comprehensive clustering over the entire answer pool. This defines a shared semantic output space $\mathcal{Y} = \{y_1, \dots, y_K\}$, from which the empirical distribution $p(Y \mid x, c_\mathcal{N})$ is constructed. Using a common output space ensures that all entropy estimates are directly comparable across coalitions, avoiding the incomparability that would arise from independently constructed semantic clusters.

\item \textbf{Stage 3. Hierarchical Conditional Entropy Computation.}~
Computing the Shapley value requires estimating $v(\mathcal{A})=\mathbb{E}_{c_{\mathcal{A}}\sim q(C_{\mathcal{A}}\mid x)}[\mathcal{H}(Y\mid x,c_{\mathcal{A}})]$ for all coalitions $\mathcal{A} \subseteq \mathcal{N}$. A naive approach would estimate each coalition independently via separate LLM calls; however, such an approach generally fails to preserve the required marginalization consistency, $p(Y \mid x, c_\mathcal{A}) = \sum_{c_k} q(c_k \mid x)\, p(Y \mid x, c_\mathcal{A}, c_k)$ where $k \notin \mathcal{A}$. As a consequence, the empirical marginal contribution $\mathcal{I}(Y; C_k \mid x, C_\mathcal{A})$ can have negative values due to an artifact of inconsistent estimation, thereby undermining the reliability of the resulting uncertainty attributions. To resolve this issue, we introduce a \emph{bottom-up marginalization} algorithm that treats unclarified spans in any coalition $\mathcal{A} \subset \mathcal{N}$ as marginalized variables.\footnote{This approach is structurally analogous to the interventional (marginal) Shapley value in feature attribution~\citep{SHAP_lundberg_nips,shapAlgorithm_suinlee_nature,truetotheModelorData_suinlee_arxiv}, where ``absent'' features are integrated out using their marginal distributions.} 
The independence of our premise generation ensures that the premises of unclarified spans remain independent of those already specified, i.e., $q(c_{\setminus \mathcal{A}} \mid x, c_\mathcal{A}) = q(c_{\setminus \mathcal{A}} \mid x)$. This justifies deriving $p(Y\mid x, c_{\mathcal{A}})$ for any coalition by marginalizing over the remaining premises. Crucially, since the premises $c_{\setminus\mathcal{A}}$ are readily available from \textbf{Stage 1}, $p(Y\mid x, c_{\mathcal{A}})$ can be obtained via Monte Carlo approximation over the existing sampled premises, without additional sampling.
\end{itemize}

This bottom-up marginalization guarantees a fundamental structural property required for stable information-theoretic uncertainty quantification:
\begin{property}[Hierarchical Monotonicity]\label{property:monotonicity}
For any subset $\mathcal{A} \subseteq \mathcal{N}$ and any span $k \notin \mathcal{A}$, the expected conditional entropy decreases monotonically as more spans are clarified:
\begin{equation}\label{eq:monotonicity}
    \mathbb{E}_{c_\mathcal{A}}[\mathcal{H}(Y \mid x, c_\mathcal{A})]
    \;\geq\;
    \mathbb{E}_{c_{\mathcal{A} \cup \{k\}}}[\mathcal{H}(Y \mid x, c_{\mathcal{A} \cup \{k\}})].
\end{equation}
Equivalently, every marginal contribution of the ambiguous span in \eqref{eq:shapley_mi} is non-negative:
\begin{equation}\label{eq:nonneg}
    \mathcal{I}(Y; C_k \mid x, C_\mathcal{A})=
    \mathbb{E}_{c_\mathcal{A}}[\mathcal{H}(Y \mid x, c_\mathcal{A})]
    - \mathbb{E}_{c_{\mathcal{A} \cup \{k\}}}[\mathcal{H}(Y \mid x, c_{\mathcal{A} \cup \{k\}})] \ge 0.
\end{equation}
The derivation is provided in Appendix~\ref{sec:appendix_proofs}.
\end{property}
Algorithm~\ref{alg:bottomup} formalizes this procedure by computing $\mathbb{E}_{c_\mathcal{A}}[\mathcal{H}(Y \mid x, c_\mathcal{A})]$ for all coalitions through marginalization over unclarified spans. These quantities are then used to evaluate \eqref{eq:shapley_mi}.
\refstepcounter{algorithm}\label{alg:bottomup}
\begin{tcolorbox}[
  enhanced,
  colback=gray!8,        
  colframe=gray!40,      
  leftrule=3pt, rightrule=0pt, toprule=0pt, bottomrule=0pt,
  arc=0pt,
  left=8pt,
  fontupper=\small
]

{\textsc{\textbf{Algorithm \thealgorithm: Bottom-up Marginalization}}}\medskip\\
\textbf{Input:} Bottom-level (finest) samples from $\{p(Y \mid x, c_\mathcal{N})\}$ for all joint clarifications.\\
\textbf{For each} subset $\mathcal{A} \subseteq \mathcal{N}$ and each premise combination $c_\mathcal{A}$\textbf{:}
\begin{enumerate}
    \item Identify all bottom-level $c_\mathcal{N}$ consistent with $c_\mathcal{A}$,
    i.e., $c_\mathcal{N} = c_\mathcal{A} \cup c_{\setminus \mathcal{A}}$ for some $c_{\setminus \mathcal{A}}$.
    \item Compute
    $p(Y \mid x, c_\mathcal{A}) = 
    \sum_{c_{\setminus \mathcal{A}}} q(c_{\setminus\mathcal{A}}\mid x)\,p(Y \mid x, c_\mathcal{A}\cup c_{\setminus \mathcal{A}})$.
    \item Compute
    $\mathcal{H}(Y \mid x, c_\mathcal{A}) = -\sum_{y} p(y \mid x, c_\mathcal{A}) \log p(y \mid x, c_\mathcal{A})$.
\end{enumerate}
\textbf{For each} subset $\mathcal{A} \subseteq \mathcal{N}$\textbf{:}
\begin{enumerate}
    \setcounter{enumi}{3}
    \item Compute
    $\mathbb{E}_{c_\mathcal{A}}[\mathcal{H}(Y \mid x, c_\mathcal{A})]
    = \sum_{c_\mathcal{A}} q(c_{\mathcal{A}}\mid x)\, \mathcal{H}(Y \mid x, c_\mathcal{A})$.
\end{enumerate}
\textbf{Output:} Cached values $\mathbb{E}_{c_\mathcal{A}}[\mathcal{H}(Y \mid x, c_\mathcal{A})]$ for Shapley
computation via \eqref{eq:shapley_mi}.
\end{tcolorbox}

\section{Experiments}\label{sec:experiments}

\begin{figure*}[t]
\centering
\begin{minipage}[t]{0.8\linewidth}
\centering
\captionof{table}{Uncertainty-based ambiguity detection.}\vspace{-1.5mm}
\label{tab:main_table}
{\renewcommand{\arraystretch}{0.9}
\resizebox{\linewidth}{!}{%

\begin{tabular}{p{4.0cm}|ccc|ccc}
\toprule
\rowcolor{gray!8}
 & \multicolumn{3}{c|}{AmbigQA (GPT-4)} & \multicolumn{3}{c}{AmbiEnt (GPT-5.4-mini)}\\[4pt]
\rowcolor{gray!8}
\textit{Method} & \textbf{F1 Score} & \textbf{AUROC} & \textbf{AUPRC} & \textbf{F1 Score} & \textbf{AUROC} & \textbf{AUPRC} \\
\midrule
\multicolumn{7}{l}{\textit{Other Uncertainty Methods}} \\[2pt]
Semantic Entropy  & 44.58 & 57.16 & 57.00 & 42.20 & 57.56 & 54.36 \\
Sample Diversity  & 55.24 & 60.86 & 59.35 & 48.33 & 59.23 & 57.48 \\
Sample Repetition  & 56.98 & 60.90 & 59.24 & 43.69 & 56.67 & 57.78 \\
Self-Consistency  & 53.40 & 59.33 & 57.59 & 47.93 & 59.28 & 59.46 \\
\cmidrule(lr){1-7}
\multicolumn{7}{l}{\textit{Aleatoric Methods}} \\[2pt]
Ask4Conf-D  & 56.36 & 52.44 & 53.97 & 65.43 & 50.82 & 52.54 \\
Deep Ensembles  & 59.70 & 60.29 & \underline{60.31} & 56.37 & 58.96 & 61.49 \\
ICE  & 65.54 & 60.66 & 59.98 & 57.33 & 55.40 & 53.21 \\
LOO  & 67.77 & 50.43 & 54.35  & 66.07 & 49.04 & 57.00 \\
LOO (Max Span)  & 67.77 & 52.36 & 55.80 & 66.07 & 52.66 & 59.69 \\
\midrule
\rowcolor{blue!8}
\textbf{\proposed(Ours, Max Span)}  & \textbf{70.38} & \underline{64.20} & 59.95  & \textbf{74.60} & \textbf{77.80} & \underline{75.19} \\
\rowcolor{blue!8}
\textbf{\proposed(Ours)}  & \textbf{70.38} & \textbf{65.77} &\textbf{61.52}  & \underline{73.84} & \underline{77.68} & \textbf{75.71} \\
\bottomrule 
\end{tabular} \vspace{-2mm}
}}
\vspace{-4mm}
\end{minipage}
\end{figure*}

\begin{figure*}[t]
\hfill
\begin{minipage}[t]{0.48\linewidth}
\centering
\captionof{table}{\texttt{Answerer \& Clusterer} backbone with \texttt{GPT-5.4-mini} / \texttt{Gemini models}} \vspace{-1.5mm}
\label{tab:main_table2}
\vspace{0.05em}

{\renewcommand{\arraystretch}{0.85}
\resizebox{\linewidth}{!}{%
    
\begin{tabular}{lccc}
\toprule
\rowcolor{gray!8}
\textit{Method} & \textbf{F1 Score} & \textbf{AUROC} & \textbf{AUPRC} \\
\midrule
\multicolumn{4}{l}{\texttt{GPT-5.4-mini}} \\
Deep Ensembles  & 63.41 & 62.62 & 57.29 \\
ICE  & 66.67 & \textbf{64.61} & \textbf{63.11}\\
\midrule
\rowcolor{blue!6}
\textbf{\proposed(Ours, Max Span)}  & \textbf{69.53} & 62.37 & 58.63 \\
\rowcolor{blue!8}
\textbf{\proposed(Ours)}  & \textbf{69.53} & \underline{63.60} & \underline{62.08} \\
\midrule
\multicolumn{4}{l}{\texttt{Gemini-3.1-flash-lite-preview}} \\
Deep Ensembles  & 44.30 & 55.92 & 54.26 \\
ICE  & 63.21 & \underline{63.41} & \textbf{63.53}\\
\midrule
\rowcolor{blue!6}
\textbf{\proposed(Ours, Max Span)}  & \textbf{66.67} & 62.97 & 58.95 \\
\rowcolor{blue!8}
\textbf{\proposed(Ours)}  & \textbf{66.67} & \textbf{64.22} & \underline{63.17} \\

\midrule
\multicolumn{4}{l}{\texttt{Gemini-2.5-flash}} \\
Deep Ensembles  & 56.28 & 55.42 & 53.42 \\
ICE  & 65.82 & 61.71 & \underline{61.15}\\
\midrule
\rowcolor{blue!6}
\textbf{\proposed(Ours, Max Span)}  & \textbf{68.75} & \underline{63.50} & 60.06 \\
\rowcolor{blue!8}
\textbf{\proposed(Ours)}  & \textbf{68.75} & \textbf{64.84} &\textbf{62.74} \\
\bottomrule
\end{tabular}

}
}
\end{minipage}
\hfill
\begin{minipage}[t]{0.48\linewidth}
\centering
\captionof{table}{\texttt{Answerer \& Clusterer} backbone with \texttt{Gemini models}} \vspace{-1.5mm}
\label{tab:main_table2_NLI}
\vspace{0.05em}
{\renewcommand{\arraystretch}{0.85}
\resizebox{\linewidth}{!}{%
    \begin{tabular}{lccc}
\toprule
\rowcolor{gray!8}
\textit{Method} & \textbf{F1 Score} & \textbf{AUROC} & \textbf{AUPRC} \\
\midrule

\multicolumn{4}{l}{\texttt{Gemini-3.1-flash-lite-preview}} \\
Deep Ensembles  & 27.52 & 46.74 & 47.76 \\
ICE  & 50.70 & 50.71 & 49.45 \\
Ask4Conf-D  & 49.12 & 45.51 & 49.57 \\
LOO  & \textbf{66.07} & 53.23 & 55.63 \\
LOO (Max Span)  & 65.47 & 52.94 & 56.20 \\
\midrule
\rowcolor{blue!6}
\textbf{\proposed(Ours, Max Span)}  & \underline{65.59} & \textbf{72.39} & \textbf{70.65} \\
\rowcolor{blue!8}
\textbf{\proposed(Ours)}  & \underline{65.59} & \underline{71.96} & \underline{69.32} \\

\midrule
\multicolumn{4}{l}{\texttt{Gemini-2.5-flash}} \\
Deep Ensembles  & 41.66 & 54.28 & 56.19 \\
ICE  & 51.38 & 51.31 & 49.97 \\
Ask4Conf-D  & 60.20 & 42.86 & 46.83 \\
LOO  & 66.07 & 38.64 & 44.39 \\
LOO (Max Span)  & 65.47 & 41.48 & 46.22 \\
\midrule
\rowcolor{blue!6}
\textbf{\proposed(Ours, Max Span)}  & \textbf{71.42} & \underline{76.40} & \underline{74.52} \\
\rowcolor{blue!8}
\textbf{\proposed(Ours)}  & \textbf{71.42} & \textbf{76.47} & \textbf{75.78} \\

\bottomrule
\end{tabular}
}
}
\end{minipage}

\vspace{3mm}
\centering
\includegraphics[width=1\linewidth]{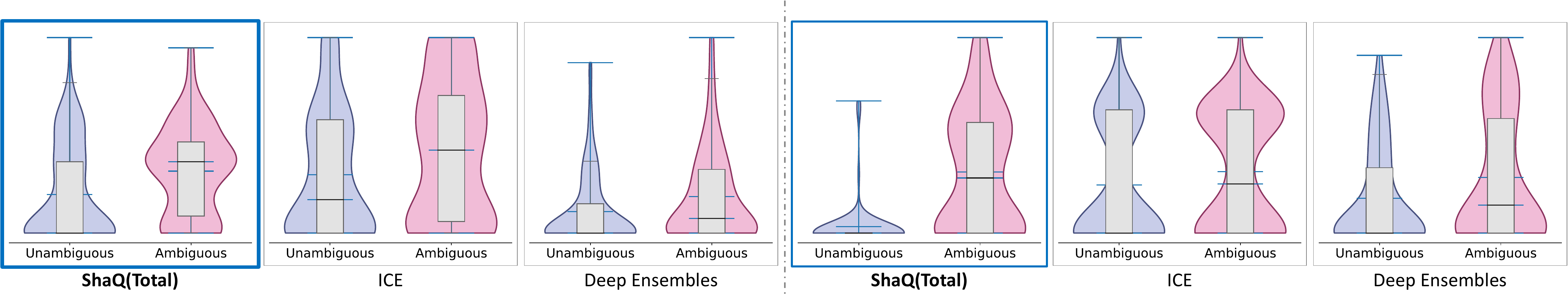} \vspace{-5mm}
\captionof{figure}{\textbf{Aleatoric uncertainty distributions on AmbigQA (Left) and AmbiEnt (Right).}} 
\vspace{-5.0mm}
\label{fig:Violin}
\end{figure*}

\subsection{Experiment setup}
\vspace{-1.5mm}
\textbf{Baselines.}~~
For baseline methods that explicitly estimate aleatoric uncertainty, we use Input Clarification Ensemble (ICE)~\citep{ICE_hou_arxiv}, Deep Ensembles~\citep{deepensemble_lakshiminarayana_nips}, and Ask4Conf-D~\citep{ICE_hou_arxiv,ask4conf_tian_emnlp}. We also evaluate answer-confidence-based estimators such as Semantic Entropy~\citep{SemanticUncertainty_kuhn_iclr}, Sample Diversity~\citep{si2022prompting}, Sample Repetition~\citep{cole2023selectively}, and Self-Consistency~\citep{cole2023selectively}, following prior work~\citep{ICE_hou_arxiv}. 

\textbf{Implementation Details.}~~
For AmbigQA~\citep{ambigqa_sewon_emnlp}, we use \texttt{GPT-4} as the primary backbone for all baselines and modules of \proposed, following the clarifier backbone used in ICE~\citep{ICE_hou_arxiv}. 
For AmbiEnt~\citep{ambient-liu-etal-2023-afraid}, we use \texttt{GPT-5.4 mini}, as its natural language inference (NLI) formulation requires stronger reasoning capabilities~\citep{morishita2024enhancing}. 
To examine whether the observed gains are robust to the choice of LLM, we additionally evaluate \proposed and the aleatoric uncertainty baselines using \texttt{Gemini}-based models for ambiguity detection.
Following prior work~\citep{ICE_hou_arxiv}, \textit{Deep Ensembles} are implemented by varying the in-context examples across ensemble members, since parameter-level ensembling is infeasible for frozen LLMs. 
We further apply semantic clustering to both ICE and \textit{Deep Ensembles} to ensure a fair comparison. 
Full configurations and prompts are provided in Appendix~\ref{sec:Configurations}.

\subsection{Ambiguity Detection with Aleatoric Uncertainty}\label{subsec:ambiguity_detection}
To assess whether the aleatoric uncertainty estimated by \proposed aligns with true input ambiguity, we follow the ambiguity-detection protocol of prior work~\citep{ICE_hou_arxiv} and treat the aleatoric uncertainty produced by each method as signals for ambiguity detection. An effective aleatoric uncertainty measure should assign higher uncertainty to ambiguous questions and lower uncertainty to unambiguous ones. 
To this end, we evaluate two ambiguity benchmarks, AmbigQA~\citep{ambigqa_sewon_emnlp} and AmbiEnt~\citep{ambient-liu-etal-2023-afraid}, measuring how well each method distinguishes ambiguous from unambiguous questions.

Since \proposed produces span-level uncertainty attributions ($\phi_k$), we use~\eqref{eq:efficiency}, i.e., $\sum_{k \in [n]} \phi_k$, as the total aleatoric uncertainty for ambiguity detection, and denote it by \proposed. Additionally, we use the maximum span-level uncertainty attribution, $\max_{k \in [n]} \phi_k$, denoted by \proposed~(\texttt{Max}), as a direct variant. To examine the role of interaction modeling, we include LOO-based variants~\eqref{eq:span_loo} as direct ablations, assessing whether evaluating a span's contribution under a fully-clarified context is sufficient in such settings: LOO (i.e., $\sum_{k=1}^{n} \phi^{\mathrm{LOO}}_k$) and LOO-Max (i.e., $\max_{k \in [n]} \phi^{\mathrm{LOO}}_{k}$).

\textbf{Ambiguity Detection.}~~
We evaluate ambiguity detection performance using the best F1 score, AUROC, and AUPRC. As shown in Tables~\ref{tab:main_table}, ~\ref{tab:main_table2} and \ref{tab:main_table2_NLI}, \proposed and \proposed(\texttt{Max}) consistently achieve strong performance across different backbones and evaluation metrics, demonstrating the robustness of our uncertainty localization framework. In particular, on AmbiEnt, where ambiguity often involves more complex and compositional phenomena, \proposed and \proposed~(\texttt{Max}) outperform existing baselines by a substantial margin in AUROC and AUPRC. This indicates that \proposed is especially effective at quantifying the degree of ambiguity, going \textit{beyond simply detecting its presence}.

Furthermore, Figure~\ref{fig:Violin} shows the distributions of aleatoric uncertainty for ambiguous and unambiguous inputs. Compared to the baselines, \proposed exhibits the clearest separation between ambiguous and unambiguous inputs, assigning consistently higher uncertainty to ambiguous questions. This separation supports the effectiveness of \proposed as a fine-grained ambiguity quantification method. 

Overall, these results show that the uncertainty estimates produced by \proposed closely align with human-annotated ambiguity, while the gains over the aggregate input-level baseline (i.e., ICE) highlight the benefit of explicitly localizing ambiguous spans.


\begin{wraptable}{r}{0.48\textwidth}
\vspace{-1.5em}
\centering
\caption{Multi-span results on AmbiEnt}
\label{tab:many_span_main}
\resizebox{\linewidth}{!}{%
\begin{tabular}{lccc}
\toprule
\rowcolor{gray!8}
\textit{Method} & \textbf{F1 Score} & \textbf{AUROC} & \textbf{AUPRC} \\
\midrule
Ask4Conf-D  & 83.95 & 40.40 & 78.13 \\
Deep Ensembles  & 58.62 & 45.95 & 75.25 \\
ICE  & 59.64 & 53.40 & 77.54 \\
LOO  & 85.36 & 49.11 & 79.59 \\
LOO(Max Span)  & 85.36 & 51.64 & 80.26 \\
\midrule
\rowcolor{blue!6}
\textbf{\proposed(Ours, Max)}  & \textbf{86.95} & \textbf{81.31} & \textbf{92.39} \\
\rowcolor{blue!8}
\textbf{\proposed(Ours)}  & \underline{86.56} & \underline{80.05} & \underline{91.27} \\
\bottomrule
\end{tabular}%
}
\vspace{-1.0em}
\end{wraptable}

\textbf{Multi Span Ambiguity Detection.}~~  
We further analyze performance on AmbiEnt inputs where our localization stage identifies two or more candidate ambiguous spans (i.e., $|\mathcal{S}| \geq 2$). This case provides a more rigorous evaluation of localization, as multiple sources of ambiguity may coexist and interact within a single input. 


In Table~\ref{tab:many_span_main}, \proposed and its variants achieve the best performance, outperforming all baselines and ablations, including LOO and LOO-Max. These results support the central motivation of \proposed: in the presence of interacting ambiguity sources, aleatoric uncertainty is more accurately attributed through a dependency-aware decomposition, as opposed to a static, fully-clarified baseline.
Appendix~\ref{sec:Appendix Multi Span} details further AmbigQA multi-span results and highlights \proposed's inherent robustness to imperfect localization. Specifically, our formulation naturally neutralizes extraction errors by assigning negligible Shapley values to falsely identified spans.

\subsection{Uncertainty-Guided Clarification} \label{subsec:uncertainty_guidance}
\textbf{Clarification Simulation Setup.}~~
We evaluate whether each method provides \textit{actionable uncertainty feedback} in an uncertainty-guided clarification setting. 
As illustrated in Figure~\ref{fig:Dialog}, we \textit{simulate a user-facing workflow} in which a method provides uncertainty feedback for an ambiguous input query, and an LLM user revises the original input $x$ into a clarified input $x'$. 
We apply this procedure to ambiguous questions from the AmbigQA benchmark~\citep{ambigqa_sewon_emnlp}. 
While baselines provide only input-level uncertainty feedback in the form of a single scalar score, \proposed provides localized span-level uncertainty attributions, allowing us to test whether localization supports more targeted clarification while preserving the original intent.
To avoid bias toward examples favored by a single estimator, we evaluate on a superset formed by the union of the top 20\% examples ranked by \proposed's maximum span-level aleatoric uncertainty and those ranked by the maximum aleatoric uncertainty among the baselines.
\begin{figure*}[t]
\centering
\vspace{-1em}

\begin{minipage}[t]{0.48\textwidth}
\centering
\includegraphics[width=\linewidth]{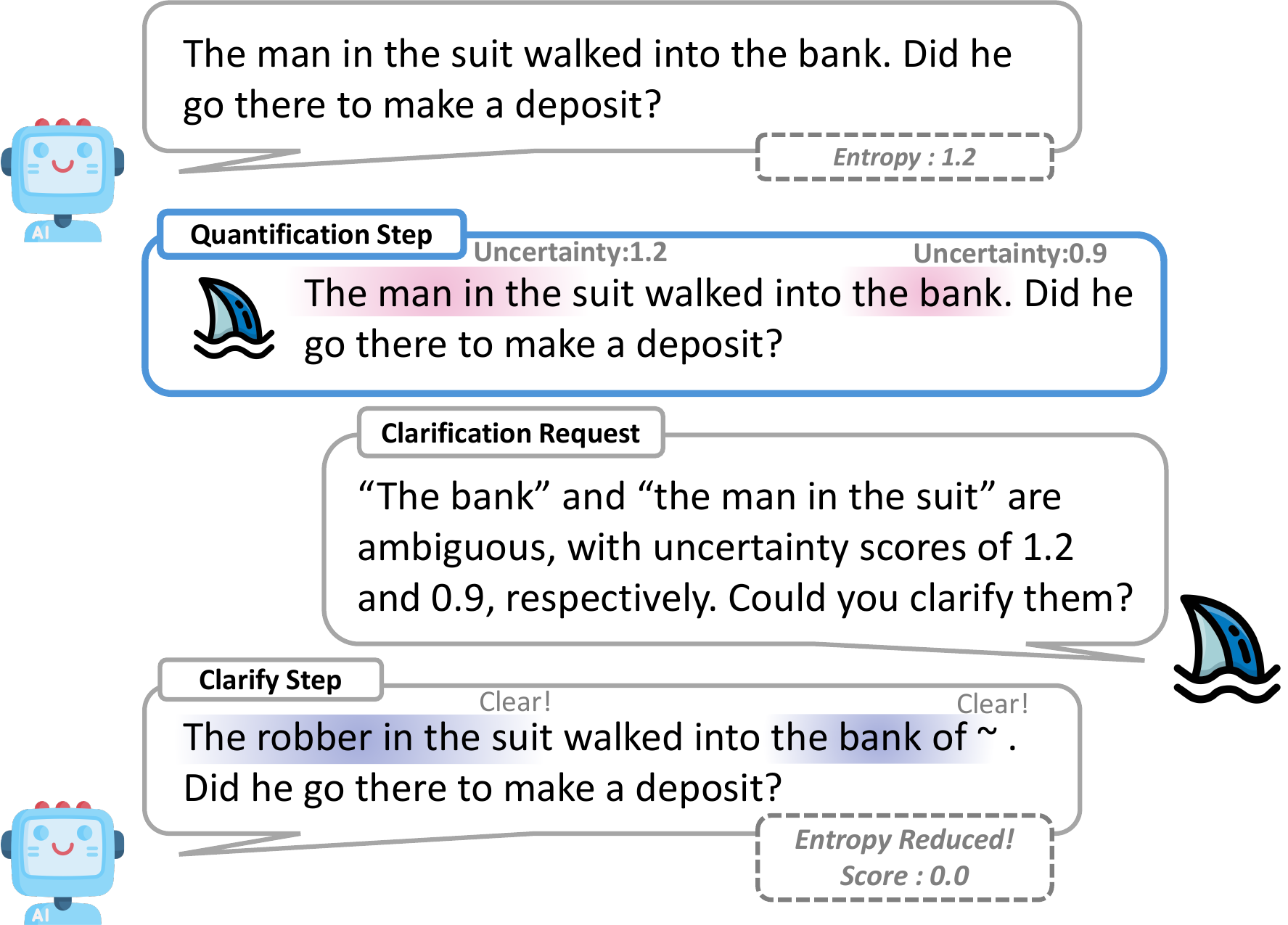}
\caption{\textbf{Uncertainty-guided clarification.}}
\label{fig:Dialog}
\end{minipage}
\hfill
\begin{minipage}[t]{0.48\textwidth}
\centering
\includegraphics[width=\linewidth]{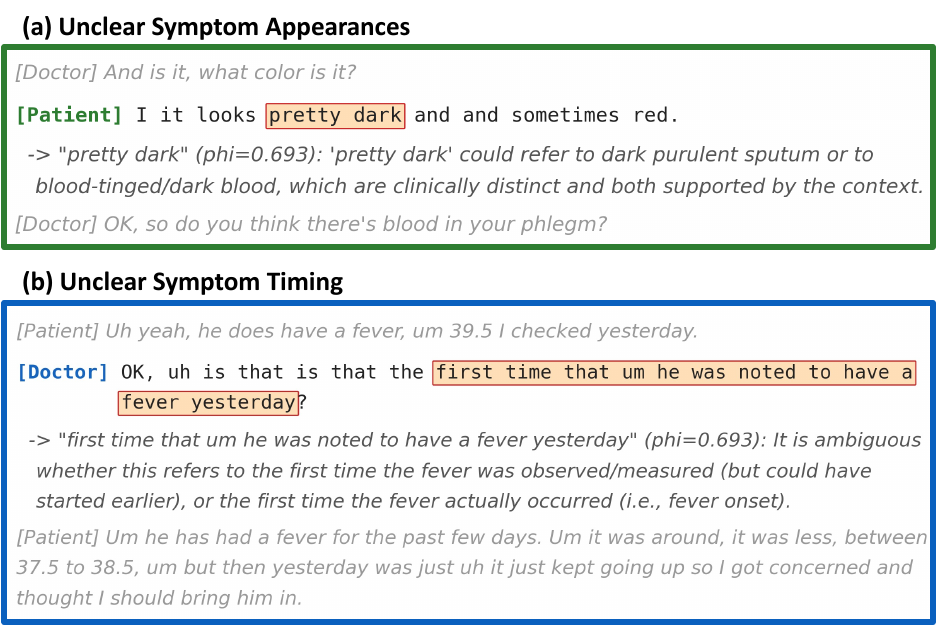}
\caption{\textbf{Qualitative results on MediTOD.}}
\label{fig:medical_dialogue}
\end{minipage}

\vspace{-2em} 
\end{figure*}

This captures both localized ambiguity and broadly high-uncertainty cases. 
We use the same backbone model as in Section~\ref{subsec:ambiguity_detection} (i.e., \texttt{GPT-4}); details and prompts are provided in Appendix~\ref{sec:Configurations}.

\textbf{Evaluation Metrics.}~~
To assess the efficiency of \proposed, we employ two complementary metrics that capture the trade-off between \textit{user effort and information gain}: entropy reduction, $\Delta \mathcal{H} = \mathcal{H}(Y|x) - \mathcal{H}(Y|x')$ and word-level edit distance (\textit{Edit}), computed as the Levenshtein distance over word tokens. These metrics should be interpreted jointly: high $\Delta \mathcal{H}$ indicates resolved model-side uncertainty, while low \textit{Edit} indicates minimal modification. The desired outcome is high entropy reduction with low editing effort, showing that uncertainty information enables targeted, ``surgical'' refinement of the ambiguity sources.

\begin{wraptable}{r}{0.48\textwidth}
\vspace{-10pt}
\centering
\caption{High Aleatoric Uncertainty Superset}\label{tab:clarification_superset}
\resizebox{\linewidth}{!}{%
\begin{tabular}{lccc}
\toprule
\rowcolor{gray!8}
\textit{Method} & Clarified & $\Delta H$($\uparrow$) & Edit ($\downarrow$)\\
\midrule
Semantic Entropy  &  0.3423 & 0.0330 & 6.4545 \\
ICE  &  0.4387 & -0.0633 & 6.7878\\
Deep Ensembles  &  0.3831 & -0.0076 & 7.4545\\
\midrule
\rowcolor{blue!8}
\textbf{\proposed(Ours)}&   \textbf{0.3365} & \textbf{0.0389} & \textbf{5.5151}\\
\bottomrule
\end{tabular}
}
\vspace{-10pt}
\end{wraptable}
\textbf{Overall Results.}~~
Table~\ref{tab:clarification_superset} reports the uncertainty-guided clarification results on the high-uncertainty superset.
We find that \proposed achieves the highest entropy reduction while requiring the lowest word-level edits, indicating that span-level uncertainty provides actionable guidance for resolving ambiguity. We also observe that baseline-guided revisions often require larger modifications and may even increase predictive entropy~(-0.0633), suggesting that input-level uncertainty alone can fail to identify which part of the query should be clarified. In Appendix~\ref{sec:Appendix Clarification}, we further analyze split evaluation sets under other selection criteria and observe the same trend.

\vspace{-0.5em}
To demonstrate that \proposed generalizes beyond synthetic LLM prompts to complex human-to-human interactions, we evaluate its performance on the MediTOD benchmark \citep{meditod_saley_emnlp}. This dataset consists of simulated doctor-patient interviews conducted between physicians (see Appendix~\ref{sec:Appendix Meditod} for details and additional results). Representative examples in Figure~\ref{fig:medical_dialogue} highlight the framework's ability to pinpoint clinically significant ambiguities. In panel~(a), the phrase \textit{''pretty dark''} is localized as ambiguous—potentially indicating either dark purulent sputum or a blood-tinged discharge. The doctor's immediate follow-up, inquiring about blood in the phlegm, perfectly validates this localization. In panel~(b), \textit{''first time that um he was noted to have a fever yesterday''} is flagged due to uncertainty between the time of observation and the actual onset of the fever. This is subsequently confirmed by the patient’s next turn, clarifying that the fever had actually been ongoing for several days. By providing targeted localization of underspecified information, \proposed shows significant promise for enhancing reliability in high-stakes domains.
\section{Conclusion}
\vspace{-0.7em}
\label{sec:conclusion}
In this paper, we addressed a limitation of existing UQ methods in LLM: the inability to identify the specific input sources driving aleatoric uncertainty. To bridge this gap, we proposed \proposed, a theoretically-grounded framework designed to localize input-induced uncertainty at the span level. By formulating ambiguity resolution as a cooperative game, \proposed models the semantic dependencies between ambiguous spans. Empirically, \proposed significantly improves ambiguity detection on benchmarks over monolithic baselines. Furthermore, it enables targeted input clarifications with minimal user effort and demonstrates potential for enhancing LLM reliability in high-stakes domains.

\bibliographystyle{icml2026}
\bibliography{arxiv_paper}

\clearpage
\appendix
\clearpage
{\Huge\bfseries Appendix\par}
\vspace{2em}
{\Large\bfseries Table of Contents\par}
\vspace{0.3em}
\hrule
\vspace{1em}

\noindent\textbf{A\quad Mathematical Derivations}\hfill\textbf{\pageref{sec:appendix_proofs}}\\[1.0em]

\noindent\textbf{B\quad Configurations and Prompts}\hfill\textbf{\pageref{sec:Configurations}}\\[0.5em]
\hspace*{1.5em} B.1\quad Dataset Description: AmbigQA\hfill\pageref{sec:ambigqa_desc}\\[0.3em]
\hspace*{1.5em} B.2\quad Dataset Description: AmbiEnt\hfill\pageref{sec:ambient_desc}\\[0.3em]
\hspace*{1.5em} B.3\quad Configurations\hfill\pageref{sec:conf}\\[0.3em]
\hspace*{1.5em} B.4\quad Prompts\hfill\pageref{sec:AmbigQA_Prompts}\\[0.3em]
\hspace*{1.5em} B.5\quad Prompts and Configurations for Uncertainty-guided Clarification\hfill\pageref{sec:Uncertainty-guided Clarification}\\[1.0em]

\noindent\textbf{C\quad Additional Results}\hfill\textbf{\pageref{sec:Additional Result}}\\[0.5em]
\hspace*{1.5em} C.1\quad Additional Results on Ambiguity Detection\hfill\pageref{sec:Appendix Ambiguity}\\[0.3em]
\hspace*{1.5em} C.2\quad Additional Ablation Results on Ambiguity Detection\hfill\pageref{sec:Appendix Ablation}\\[0.3em]
\hspace*{1.5em} C.3\quad Additional Results on Multi Span Ambiguity Detection\hfill\pageref{sec:Appendix Multi Span}\\[0.3em]
\hspace*{1.5em} C.4\quad Additional Results on Uncertainty-guided Clarification\hfill\pageref{sec:Appendix Clarification}\\[0.3em]
\hspace*{1.5em} C.5\quad Additional Results on Human-to-Human Medical Multi-turn Dialogue\hfill\pageref{sec:Appendix Meditod}\\[1.0em]

\noindent\textbf{D\quad Computational Analysis}\hfill\textbf{\pageref{sec:appendix_compute}}\\[1.0em]

\noindent\textbf{E\quad Limitations and Future Directions}\hfill\textbf{\pageref{sec:appendix_limitations}}\\[1.0em]

\noindent\textbf{F\quad Broader Impacts}\hfill\textbf{\pageref{sec:broader_impacts}}\\[1.0em]

\noindent\textbf{G\quad Asset Licenses and Terms of Use}\hfill\textbf{\pageref{Asset Licenses and Terms of Use}}\\[1.0em]

\vspace{0.5em}
\hrule
\clearpage

\section{Mathematical Derivations} \label{sec:appendix_proofs}
We restate Definition~\ref{def:loo}, Theorem~\ref{thm:shapley} and Property~\ref{property:monotonicity} for convenience.

\begin{definition}[\textbf{LOO Span Attribution}]
The LOO attribution of span $s_k$ is defined as the conditional mutual information
\begin{equation}
\phi^{\mathrm{LOO}}_k \;\triangleq\; \mathcal{I}(Y; C_k \mid x, C_{\setminus k}) = \mathcal{H}(Y \mid x, C_{\setminus k}) - \mathcal{H}(Y \mid x, C),
\end{equation}
which quantifies the information about $Y$ contributed by clarifying $s_k$ after all remaining spans have been resolved.
\end{definition}
\begin{proof}
By definition of conditional mutual information,
\begin{align}
\mathcal{I}(Y; C_k \mid x, C_{\setminus k}) 
&= \mathcal{H}(Y \mid x, C_{\setminus k}) - \mathcal{H}(Y \mid x, C_k, C_{\setminus k}).
\end{align}
Since $(C_k, C_{\setminus k})$ jointly constitute $C$, the second term satisfies
$\mathcal{H}(Y \mid x, C_k, C_{\setminus k}) = \mathcal{H}(Y \mid x, C)$,
which gives
\begin{align}
\mathcal{I}(Y; C_k \mid x, C_{\setminus k}) 
&= \mathcal{H}(Y \mid x, C_{\setminus k}) - \mathcal{H}(Y \mid x, C).
\end{align}
\end{proof}

\begin{theorem}[\textbf{Shapley Value of Ambiguous Span}]
For any coalition $\mathcal{A} \subseteq \mathcal{N} \setminus \{k\}$, the marginal contribution of span $s_k$ satisfies
\begin{equation}
v(\mathcal{A} \cup \{k\}) - v(\mathcal{A}) \;=\; \mathcal{I}(Y; C_k \mid x, C_{\mathcal{A}}).
\end{equation}
Consequently, the Shapley value of span $s_k$ can be written as
\begin{equation}
\phi_k \;=\; \sum_{\mathcal{A} \subseteq \mathcal{N} \setminus \{k\}}
\frac{|\mathcal{A}|!(n-|\mathcal{A}|-1)!}{n!}\,
\mathcal{I}(Y; C_k \mid x, C_\mathcal{A}).
\end{equation}
\end{theorem}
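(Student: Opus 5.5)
The plan is to prove the marginal-contribution identity \eqref{eq:marginal} directly from Definition~\ref{def:value_function}, and then obtain \eqref{eq:shapley_mi} by substituting it into the standard Shapley formula. Throughout, $x$ is held fixed and all quantities are taken under the joint law $q(C\mid x)\,p(Y\mid x, C)$. Conditional entropies of the form $\mathcal{H}(Y\mid x, C_{\mathcal{A}})$ with a random conditioning variable are read as $\mathbb{E}_{c_{\mathcal{A}}\sim q(C_{\mathcal{A}}\mid x)}[\mathcal{H}(Y\mid x, c_{\mathcal{A}})]$. This is the same convention already used in the LOO derivation above.

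\textbf{Step 1.} Write both value terms via \eqref{eq:value_function}:
\begin{align}
v(\mathcal{A}\cup\{k\}) - v(\mathcal{A})
&= \Bigl(\mathcal{H}(Y\mid x) - \mathcal{H}(Y\mid x, C_{\mathcal{A}\cup\{k\}})\Bigr) - \Bigl(\mathcal{H}(Y\mid x) - \mathcal{H}(Y\mid x, C_{\mathcal{A}})\Bigr) \\
&= \mathcal{H}(Y\mid x, C_{\mathcal{A}}) - \mathcal{H}(Y\mid x, C_{\mathcal{A}}, C_k).
\end{align}
The term $\mathcal{H}(Y\mid x)$ cancels. Since $k\notin\mathcal{A}$, the variable $C_{\mathcal{A}\cup\{k\}}$ is exactly the pair $(C_{\mathcal{A}}, C_k)$.

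\textbf{Step 2.} Recognize the right-hand side as $\mathcal{I}(Y; C_k\mid x, C_{\mathcal{A}})$, using the definition of conditional mutual information as in the LOO derivation. Equivalently, apply the chain rule $\mathcal{I}(Y; C_{\mathcal{A}}, C_k\mid x) = \mathcal{I}(Y; C_{\mathcal{A}}\mid x) + \mathcal{I}(Y; C_k\mid x, C_{\mathcal{A}})$.

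The one point needing care, and the main obstacle, is consistency of the marginals. For the expectation over $c_{\mathcal{A}}$ in $v(\mathcal{A})$ to coincide with $\mathcal{H}(Y\mid x, C_{\mathcal{A}})$ computed from the same joint distribution as $v(\mathcal{A}\cup\{k\})$, the conditional $p(Y\mid x, c_{\mathcal{A}})$ must equal the marginalization
\[
p(Y\mid x, c_{\mathcal{A}}) = \sum_{c_k} q(c_k\mid x, c_{\mathcal{A}})\, p(Y\mid x, c_{\mathcal{A}}, c_k).
\]
I would state explicitly that every $v(\cdot)$ is defined from a single joint distribution over $(Y, C)$ given $x$. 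Under that assumption the identity is exact; the bottom-up construction of Algorithm~\ref{alg:bottomup} enforces this by design. The edge case $\mathcal{A}=\emptyset$ reduces to $v(\{k\}) = \mathcal{I}(Y; C_k\mid x)$, consistent with $v(\emptyset)=0$.

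\textbf{Step 3.} Recall the Shapley value of a game $v$ with $v(\emptyset)=0$:
\[
\phi_k = \sum_{\mathcal{A}\subseteq\mathcal{N}\setminus\{k\}} \frac{|\mathcal{A}|!\,(n-|\mathcal{A}|-1)!}{n!}\bigl(v(\mathcal{A}\cup\{k\}) - v(\mathcal{A})\bigr).
\]
Substituting \eqref{eq:marginal} termwise gives \eqref{eq:shapley_mi} immediately. The interpretation as an expected information gain follows because the weights are nonnegative and sum to one: they are the probability that $\mathcal{A}$ is the set of predecessors of $k$ in a uniformly random ordering of $\mathcal{N}$.
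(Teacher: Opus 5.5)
Your proposal is correct and follows the paper's proof essentially step for step: expand both values via the value-function definition, cancel $\mathcal{H}(Y \mid x)$, and identify the resulting difference of expected conditional entropies with $\mathcal{I}(Y; C_k \mid x, C_{\mathcal{A}})$ using $(C_{\mathcal{A}}, C_k) = C_{\mathcal{A}\cup\{k\}}$, then substitute into the Shapley formula. You also state explicitly that every $v(\cdot)$ must come from a single joint distribution so that the marginals are consistent; the paper leaves this implicit in the proof and only addresses it later through bottom-up marginalization, so your remark is a worthwhile clarification rather than a different route.
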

\begin{proof}
By Definition~\ref{def:value_function}, the marginal contribution expands as
\begin{align}
v(\mathcal{A} \cup \{k\}) - v(\mathcal{A})
&= \mathcal{I}(Y; C_{\mathcal{A} \cup \{k\}} \mid x) - \mathcal{I}(Y; C_{\mathcal{A}} \mid x) \\
&= \Bigl[\mathcal{H}(Y \mid x) - \mathbb{E}_{c_{\mathcal{A}\cup\{k\}}}\bigl[\mathcal{H}(Y \mid x, c_{\mathcal{A}\cup\{k\}})\bigr]\Bigr] \\
&\quad - \Bigl[\mathcal{H}(Y \mid x) - \mathbb{E}_{c_{\mathcal{A}}}\bigl[\mathcal{H}(Y \mid x, c_{\mathcal{A}})\bigr]\Bigr] \\
&= \mathbb{E}_{c_{\mathcal{A}}}\bigl[\mathcal{H}(Y \mid x, c_{\mathcal{A}})\bigr] 
   - \mathbb{E}_{c_{\mathcal{A}\cup\{k\}}}\bigl[\mathcal{H}(Y \mid x, c_{\mathcal{A}\cup\{k\}})\bigr].
\end{align}
By the definition of conditional mutual information and conditional entropy,
\begin{align}
\mathcal{I}(Y; C_k \mid x, C_{\mathcal{A}}) 
&= \mathcal{H}(Y \mid x, C_{\mathcal{A}}) - \mathcal{H}(Y \mid x, C_{\mathcal{A}}, C_k) \\
&= \mathbb{E}_{c_{\mathcal{A}}}\bigl[\mathcal{H}(Y \mid x, c_{\mathcal{A}})\bigr]
   - \mathbb{E}_{c_{\mathcal{A}\cup\{k\}}}\bigl[\mathcal{H}(Y \mid x, c_{\mathcal{A}\cup\{k\}})\bigr],
\end{align}
where the second equality uses $(C_{\mathcal{A}}, C_k) = C_{\mathcal{A} \cup \{k\}}$.
The two expressions are identical, which completes the proof.
\end{proof}

\begin{property}[\textbf{Hierarchical Monotonicity}]
For any subset $\mathcal{A} \subseteq \mathcal{N}$ and any span $k \notin \mathcal{A}$, the expected conditional entropy decreases monotonically as more spans are clarified:
\begin{equation}
    \mathbb{E}_{c_\mathcal{A}}[\mathcal{H}(Y \mid x, c_\mathcal{A})]
    \;\geq\;
    \mathbb{E}_{c_{\mathcal{A} \cup \{k\}}}[\mathcal{H}(Y \mid x, c_{\mathcal{A} \cup \{k\}})].
\end{equation}
Equivalently, every marginal contribution of the ambiguous span in Eq.~\eqref{eq:shapley_mi} is non-negative:
\begin{equation}
    \mathcal{I}(Y; C_k \mid x, C_\mathcal{A})=
    \mathbb{E}_{c_\mathcal{A}}[\mathcal{H}(Y \mid x, c_\mathcal{A})]
    - \mathbb{E}_{c_{\mathcal{A} \cup \{k\}}}[\mathcal{H}(Y \mid x, c_{\mathcal{A} \cup \{k\}})] \ge 0.
\end{equation}
\end{property}
\begin{proof}
By definition, the marginal contribution of span $s_k$ to coalition $\mathcal{A}$ is given by the difference in expected conditional entropy:
\begin{equation} \label{eq:proof_mi_def}
    \mathcal{I}(Y; C_k \mid x, C_\mathcal{A}) = \mathbb{E}_{c_\mathcal{A}}[\mathcal{H}(Y \mid x, c_\mathcal{A})] - \mathbb{E}_{c_{\mathcal{A} \cup \{k\}}}[\mathcal{H}(Y \mid x, c_{\mathcal{A} \cup \{k\}})].
\end{equation}
In our framework, the probability distribution for any coalition $\mathcal{A}$ is computed by marginalizing the completely clarified bottom-level states:
\begin{equation} \label{eq:proof_bottom_up}
    p(Y \mid x, c_\mathcal{A}) = \frac{1}{m^{|\mathcal{N} \setminus \mathcal{A}|}} \sum_{c_{\mathcal{N} \setminus \mathcal{A}}} p(Y \mid x, c_\mathcal{A} \cup c_{\mathcal{N} \setminus \mathcal{A}}).
\end{equation}
To relate $p(Y \mid x, c_\mathcal{A})$ to $p(Y \mid x, c_{\mathcal{A} \cup \{k\}})$, we partition the set of unclarified spans $\mathcal{N} \setminus \mathcal{A}$ into the single target span $\{k\}$ and the rest of the unclarified spans, which we denote as $\mathcal{N} \setminus \mathcal{A}'$ where $\mathcal{A}' = \mathcal{A} \cup \{k\}$.
Correspondingly, the combinations of premises can be decomposed as $c_{\mathcal{N} \setminus \mathcal{A}} = c_k \cup c_{\mathcal{N} \setminus \mathcal{A}'}$, and the total number of combinations factors as $m^{|\mathcal{N} \setminus \mathcal{A}|} = m \cdot m^{|\mathcal{N} \setminus \mathcal{A}'|}$.
Substituting this decomposition into Equation \eqref{eq:proof_bottom_up} allows us to rewrite the single summation as a nested summation:
\begin{align} \label{eq:proof_decomposition}
    p(Y \mid x, c_\mathcal{A}) &= \frac{1}{m \cdot m^{|\mathcal{N} \setminus \mathcal{A}'|}} \sum_{c_k} \sum_{c_{\mathcal{N} \setminus \mathcal{A}'}} p(Y \mid x, c_\mathcal{A} \cup c_k \cup c_{\mathcal{N} \setminus \mathcal{A}'}) \nonumber \\
    &= \frac{1}{m} \sum_{c_k} \underbrace{ \left( \frac{1}{m^{|\mathcal{N} \setminus \mathcal{A}'|}} \sum_{c_{\mathcal{N} \setminus \mathcal{A}'}} p(Y \mid x, c_{\mathcal{A}'} \cup c_{\mathcal{N} \setminus \mathcal{A}'}) \right) }_{\text{Definition of } p(Y \mid x, c_{\mathcal{A}'})}.
\end{align}
Observe that the term inside the parentheses is exactly the bottom-up definition for the subset $\mathcal{A}' = \mathcal{A} \cup \{k\}$. Therefore, we can recursively express $p(Y \mid x, c_\mathcal{A})$ as:
\begin{equation} \label{eq:proof_convex_comb}
    p(Y \mid x, c_\mathcal{A}) = \frac{1}{m} \sum_{c_k} p(Y \mid x, c_{\mathcal{A} \cup \{k\}}).
\end{equation}
Equation \eqref{eq:proof_convex_comb} mathematically establishes that $p(Y \mid x, c_\mathcal{A})$ is a strict convex combination of its immediate superset distributions $\{p(Y \mid x, c_{\mathcal{A} \cup \{k\}})\}_{c_k}$. Since the Shannon entropy $\mathcal{H}(\cdot)$ is a strictly concave function, applying Jensen's Inequality to this convex combination yields:
\begin{equation} \label{eq:proof_jensen}
    \mathcal{H}(Y \mid x, c_\mathcal{A}) \geq \frac{1}{m} \sum_{c_k} \mathcal{H}(Y \mid x, c_{\mathcal{A} \cup \{k\}}).
\end{equation}
Now, we take the expectation over $c_\mathcal{A} \sim q(c_\mathcal{A} \mid x)$ on both sides. Since our premises are sampled uniformly with probability $\frac{1}{m^{|\mathcal{A}|}}$, this gives:
\begin{align} \label{eq:proof_expectation}
    \mathbb{E}_{c_\mathcal{A}}[\mathcal{H}(Y \mid x, c_\mathcal{A})] &\geq \frac{1}{m^{|\mathcal{A}|}} \sum_{c_\mathcal{A}} \left( \frac{1}{m} \sum_{c_k} \mathcal{H}(Y \mid x, c_{\mathcal{A} \cup \{k\}}) \right) \nonumber \\
    &= \frac{1}{m^{|\mathcal{A}|+1}} \sum_{c_\mathcal{A} \cup c_k} \mathcal{H}(Y \mid x, c_{\mathcal{A} \cup \{k\}}) \nonumber \\
    &= \mathbb{E}_{c_{\mathcal{A} \cup \{k\}}}[\mathcal{H}(Y \mid x, c_{\mathcal{A} \cup \{k\}})].
\end{align}
Finally, substituting Equation \eqref{eq:proof_expectation} back into Equation \eqref{eq:proof_mi_def}, we conclude:
\begin{equation}
    \mathcal{I}(Y; C_k \mid x, C_\mathcal{A}) \geq 0.
\end{equation}
This confirms that calculating subset distributions purely via bottom-up marginalization mathematically guarantees non-negative marginal contributions across all possible coalitions.
\end{proof}


\section{Configurations and prompts}\label{sec:Configurations}
\subsection{Dataset Description: AmbigQA}\label{sec:ambigqa_desc}

AmbigQA~\citep{ambigqa_sewon_emnlp} is an open-domain question answering dataset designed to study ambiguity in naturally occurring information-seeking questions. It is constructed from NQ-Open and targets questions that may admit multiple plausible interpretations and therefore multiple valid answers. For each ambiguous question, AmbigQA provides a set of disambiguated question--answer pairs, where each rewritten question corresponds to one specific interpretation of the original ambiguous query. For unambiguous questions, the dataset provides a single answer annotation.

This annotation structure makes AmbigQA suitable for evaluating input ambiguity, since ambiguity is defined at the level of the user question rather than at the level of model-generated outputs. In our experiments, we use the presence of multiple plausible interpretations as the ambiguity label and evaluate whether uncertainty estimates assign higher scores to ambiguous questions than to unambiguous ones.

However, AmbigQA does not provide gold annotations for ambiguous spans or their corresponding premises. Therefore, it does not support direct supervised evaluation of span-level localization or premise generation. We instead use AmbigQA for input-level ambiguity detection, while span-level uncertainty attribution is evaluated through the resulting aggregate ambiguity score.

\subsection{Dataset Description: AmbiEnt}\label{sec:ambient_desc}

AmbiEnt~\citep{ambient-liu-etal-2023-afraid} is a natural language inference (NLI) benchmark designed to study ambiguity through its effect on entailment relations. Each example consists of a premise--hypothesis pair, where the task is to determine whether the premise entails, contradicts, or is neutral with respect to the hypothesis. Unlike standard NLI datasets that assume a single gold label for each example, AmbiEnt allows an example to be associated with multiple plausible NLI labels when different interpretations of the premise and/or hypothesis lead to different entailment relations.

The dataset contains linguist-annotated examples covering diverse forms of ambiguity, including lexical, syntactic, and pragmatic ambiguity. For ambiguous examples, AmbiEnt provides annotations that characterize the possible interpretations of the input and their corresponding entailment labels. This structure makes the dataset suitable for evaluating whether a model can recognize when an input admits multiple valid readings, rather than forcing all examples into a single deterministic NLI label.

In our setting, AmbiEnt is useful because ambiguity is defined over the input pair rather than over model-generated outputs. We therefore use AmbiEnt to evaluate whether aleatoric uncertainty estimates assign higher uncertainty to premise--hypothesis pairs with multiple plausible interpretations than to those with a single interpretation.

However, AmbiEnt is not designed as a supervised benchmark for span-level uncertainty attribution. Although the dataset provides ambiguity-sensitive NLI annotations, it does not directly provide gold uncertainty scores for individual spans. Therefore, we use AmbiEnt for input-level ambiguity detection, while the span-level attributions produced by our method are assessed through their aggregate uncertainty signal.

\subsection{Configurations}\label{sec:conf}
Following the evaluation dataset sampling protocol of ICE~\citep{ICE_hou_arxiv}, we conduct both ambiguity detection and uncertainty-guided clarification experiments on AmbigQA. For AmbiEnt, we construct a balanced evaluation set of 150 examples by matching the number of unambiguous and ambiguous instances. The ambiguous set includes premise-only, hypothesis-only, and premise–hypothesis ambiguities in equal proportions. For Self-Consistency, Sample Repetition, and Sample Diversity, we follow the same implementation protocol as~\citep{ICE_hou_arxiv} and use the configurations provided in~\citep{ICE_hou_arxiv}. For Self-Consistency, we use temperature 0.7 and sample 10 answers from the model, and use 1 minus the resulting confidence score as the ambiguity score. For Sample Diversity, we use temperature 0.5 and sample 10 answers, then compute the number of unique answers as the ambiguity score. For Sample Repetition, following the official implementation, we first generate an initial answer using greedy decoding and then resample 10 answers with temperature 0.5 to estimate how frequently the initial answer is reproduced; lower repetition corresponds to higher ambiguity.

For both \proposed and ICE~\citep{ICE_hou_arxiv}, we use the same sampling configuration to ensure a fair comparison. The number of generated premises or clarifications is set to 3. For each premise or clarification condition, the \texttt{Answerer} samples 5 answers. The \texttt{Answerer} temperature is set to 0.7, while the \texttt{Clarifier} in ICE and the Premise \texttt{Generator} in \proposed use temperature 0.9. Following~\citep{ICE_hou_arxiv}, we apply LLM-based semantic clustering to group semantically equivalent outputs for ICE and Deep Ensembles, using the prompts provided in~\citep{ICE_hou_arxiv}. Following~\citep{ICE_hou_arxiv}, we also apply the same unknown-output handling procedure to \proposed, ICE, and Deep Ensembles before computing entropy, ensuring that all entropy estimates are based on a consistent output-processing protocol.

\subsection{Prompts}\label{sec:AmbigQA_Prompts}

We list the AmbigQA prompts used in our experiments as follows. The prompts follow a unified format consisting of a role instruction, task description, explicit generation or decision rules, static in-context demonstrations, and instance-specific input fields. We use the same decomposition across tasks: the \texttt{Localizer} identifies ambiguous spans, the premise \texttt{Generator} produces clarification assumptions for each span, and the \texttt{Answerer} predicts the final task output under a given set of assumptions.

For AmbigQA, we use the \texttt{Answerer} prompt from ICE~\citep{ICE_hou_arxiv} and keep it identical across all methods to ensure a fair comparison. The \texttt{Answerer} also uses the same 6-shot static demonstrations from the official ICE code for all methods. For the ICE clarification model, we use the same 3-shot static demonstrations as those used for the premise \texttt{Generator} in \proposed. For Ask4Conf-D and ICE, we adopt the corresponding prompts from ICE~\citep{ICE_hou_arxiv}.

For \proposed, AmbigQA provides gold final answers but does not include gold annotations for the intermediate outputs required by our decomposition, namely oracle spans for the \texttt{Localizer} and oracle premises for the premise \texttt{Generator}. Thus, direct comparison against dataset-provided gold labels is not applicable for these intermediate components. We therefore manually annotate a small set of instances to construct static in-context demonstrations for \proposed, following predefined annotation guidelines consistent with our task decomposition. These manually annotated instances are used solely for prompting and are excluded from all evaluation data.

For AmbiEnt, we use the same prompt interface and decomposition as AmbigQA. Since the full NLI prompts differ mainly in task-specific rule blocks and static demonstrations, we do not duplicate the lengthy prompt text here. The changes are limited to: (i) replacing answer-changing QA ambiguity rules with label-changing NLI ambiguity rules, where ambiguity is defined as an interpretation difference that can change the relation between \texttt{entailment}, \texttt{neutral}, and \texttt{contradiction}; (ii) replacing the question input field with premise--hypothesis input fields; and (iii) replacing the AmbigQA in-context demonstrations with manually constructed AmbiEnt-NLI demonstrations in the same format. These NLI demonstrations cover unambiguous, premise-ambiguous, hypothesis-ambiguous, and both-side ambiguous cases, and are also excluded from the evaluation set.

This adaptation is intended only to express the same decomposition in the NLI task format, rather than to introduce method-specific prompting advantages. Within each dataset, shared modules such as the \texttt{Answerer} are kept fixed across all compared methods, and the AmbiEnt-NLI variants preserve the AmbigQA prompting interface while changing only the rules and demonstrations needed to define task-specific ambiguity.

We list the prompts used in our experiments as follows:

\begin{itemize}
    \item The \texttt{Localizer} prompt is shown in Figure~\ref{fig:localizer_prompt}.
    \item The Premise \texttt{Generator} prompt is shown in Figure~\ref{fig:premiser_prompt}.
    \item The \texttt{Answerer} prompt is shown in Figure~\ref{fig:answerer_prompt}.
    \item The \texttt{Clusterer} prompt is shown in Figure~\ref{fig:Clusterer_prompt}.
    \item The Ask4Conf-D prompt is shown in Figure~\ref{fig:ask4conf_prompt}.
\end{itemize}

\subsection{Prompts and Configurations for Uncertainty-guided Clarification}\label{sec:Uncertainty-guided Clarification}
We used same configurations for Uncertainty-guided Clarification as subsection \ref{sec:conf}.
We list the prompts and formats used in our Uncertainty-guided Clarification experiments as follows:

\begin{itemize}
    \item Uncertainty-guided Question Clarification Prompt for Baseline Methods~\ref{fig:Uncertainty-guided Clarification_prompt}.
    \item Uncertainty-guided Question Clarification Prompt for Baseline Methods~\ref{fig:Localized Uncertainty-guided Clarification_prompt}.
    \item Uncertainty Context Format for Uncertainty-guided Question Clarification~\ref{fig:Uncertainty Context}.
    \item Uncertainty Context Format for Uncertainty-guided Question Clarification with \proposed~\ref{fig:Localized Context}.
\end{itemize}

\section{Additional Results}
\label{sec:Additional Result}

\subsection{Additional Results on Ambiguity Detection}
\label{sec:Appendix Ambiguity}

\paragraph{Uncertainty Distribution Analysis.}
Figures~\ref{fig:AmbiEnt_GPT54},~\ref{fig:AmbiEnt_31_25},~\ref{fig:GPT4fullplot}, \ref{fig:reasoning_fullplot} visualize the aleatoric uncertainty distributions for ambiguous and
unambiguous samples across different \texttt{Answerer} and \texttt{Clusterer} backbone models.
Across these backbones, \proposed exhibits a similar distributional pattern: ambiguous samples
tend to receive higher uncertainty than unambiguous samples. This indicates that the uncertainty
estimated by \proposed remains consistently aligned with input ambiguity across different model
backbones.

Although the degree of separation varies across models, the overall trend remains stable. In
particular, \proposed generally produces more distinguishable distributions between ambiguous and
unambiguous questions than the baseline methods, supporting the robustness of localized aleatoric
uncertainty estimation.


\begin{figure}[t]
\centering
\includegraphics[width=0.8\linewidth]{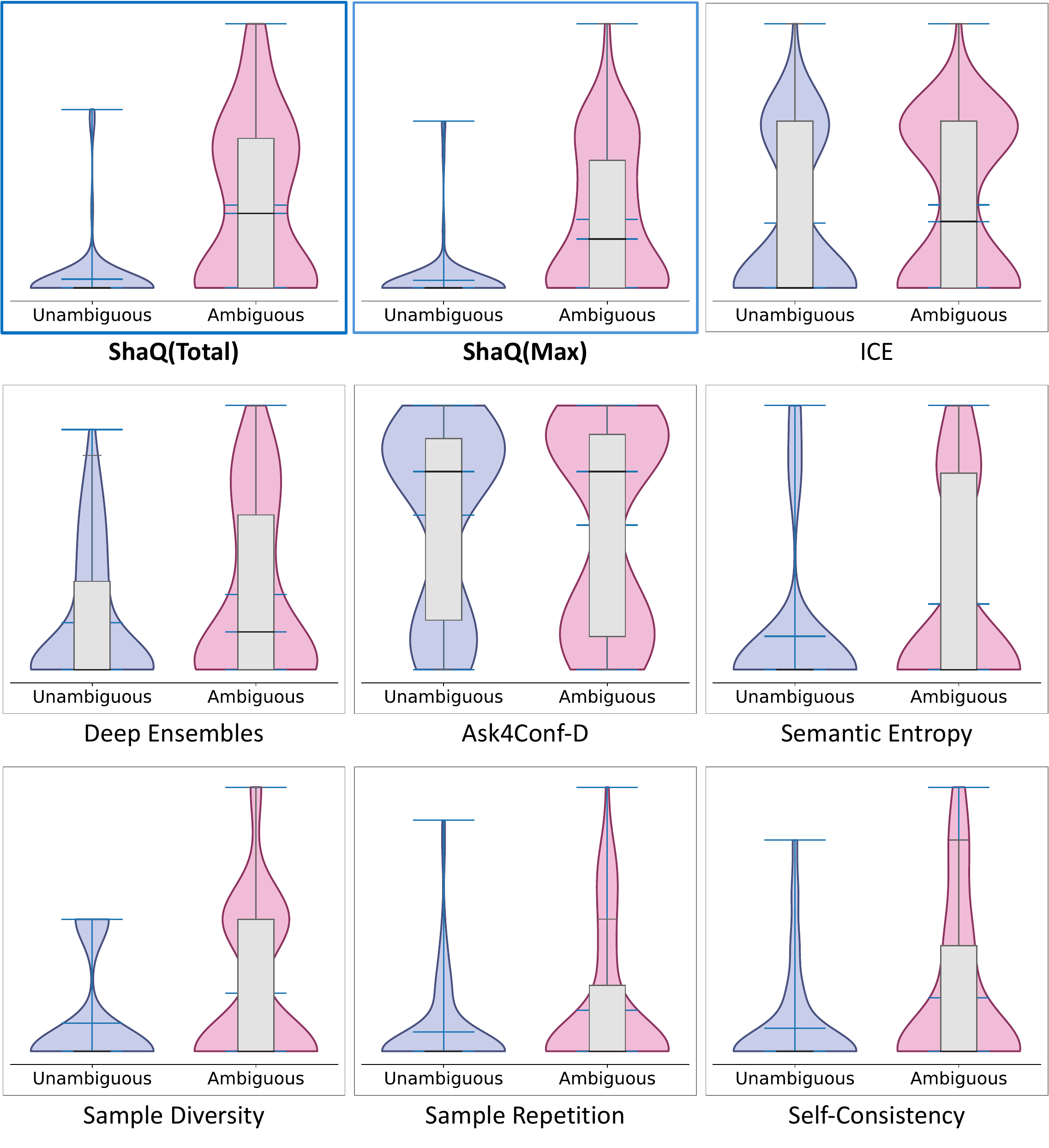}
\caption{Aleatoric uncertainty distributions on AmbiEnt (GPT-5.4-mini).}
\label{fig:AmbiEnt_GPT54}
\vspace{-1.5em}
\end{figure}
\newpage
\begin{figure}[t]
\centering
\includegraphics[width=0.8\linewidth]{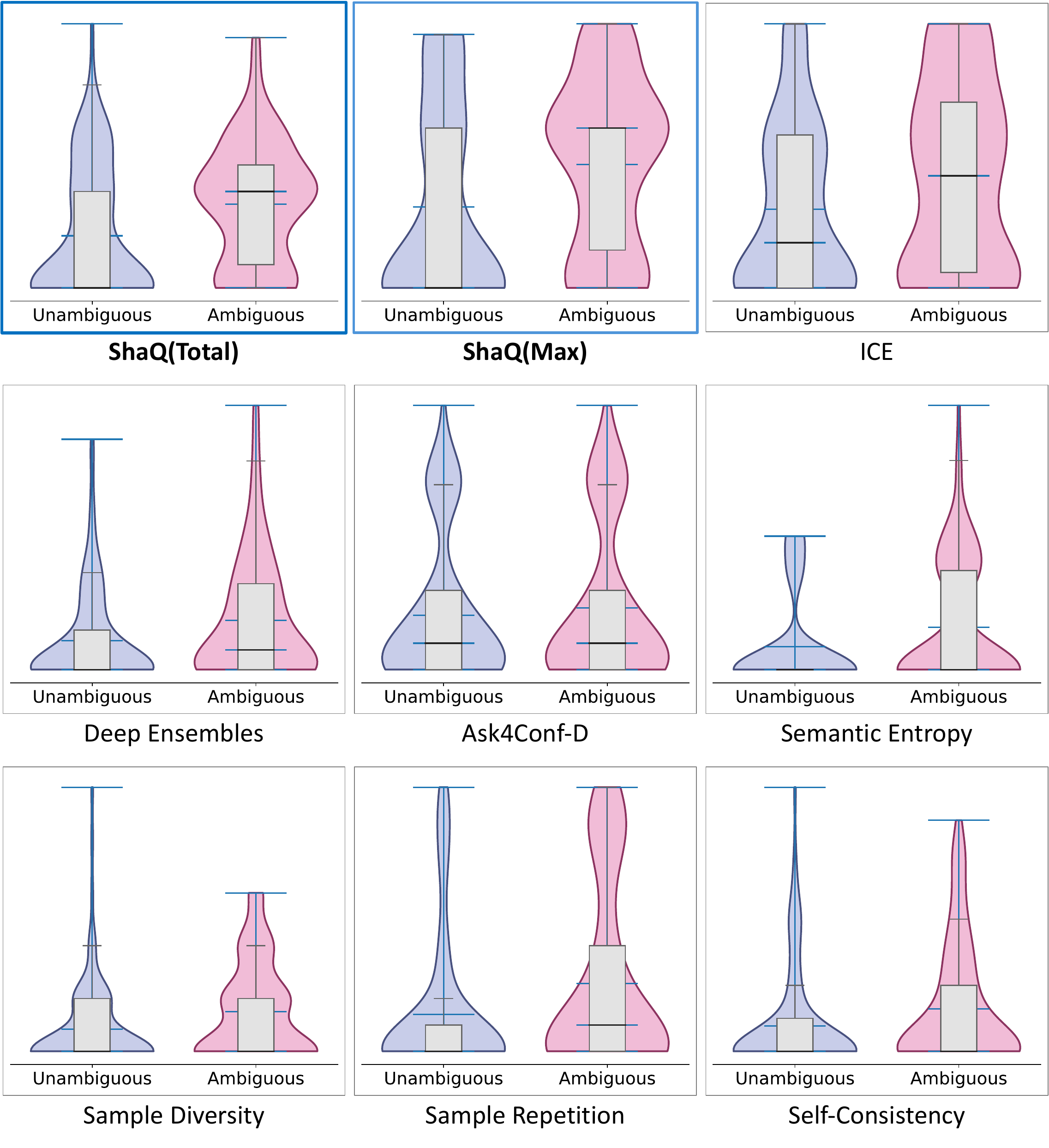}
\caption{Aleatoric uncertainty distributions on AmbigQA (GPT-4).}
\label{fig:GPT4fullplot}
\vspace{-1.5em}
\end{figure}
\clearpage
\newpage

\begin{figure}[t]
\centering
\includegraphics[width=\linewidth]{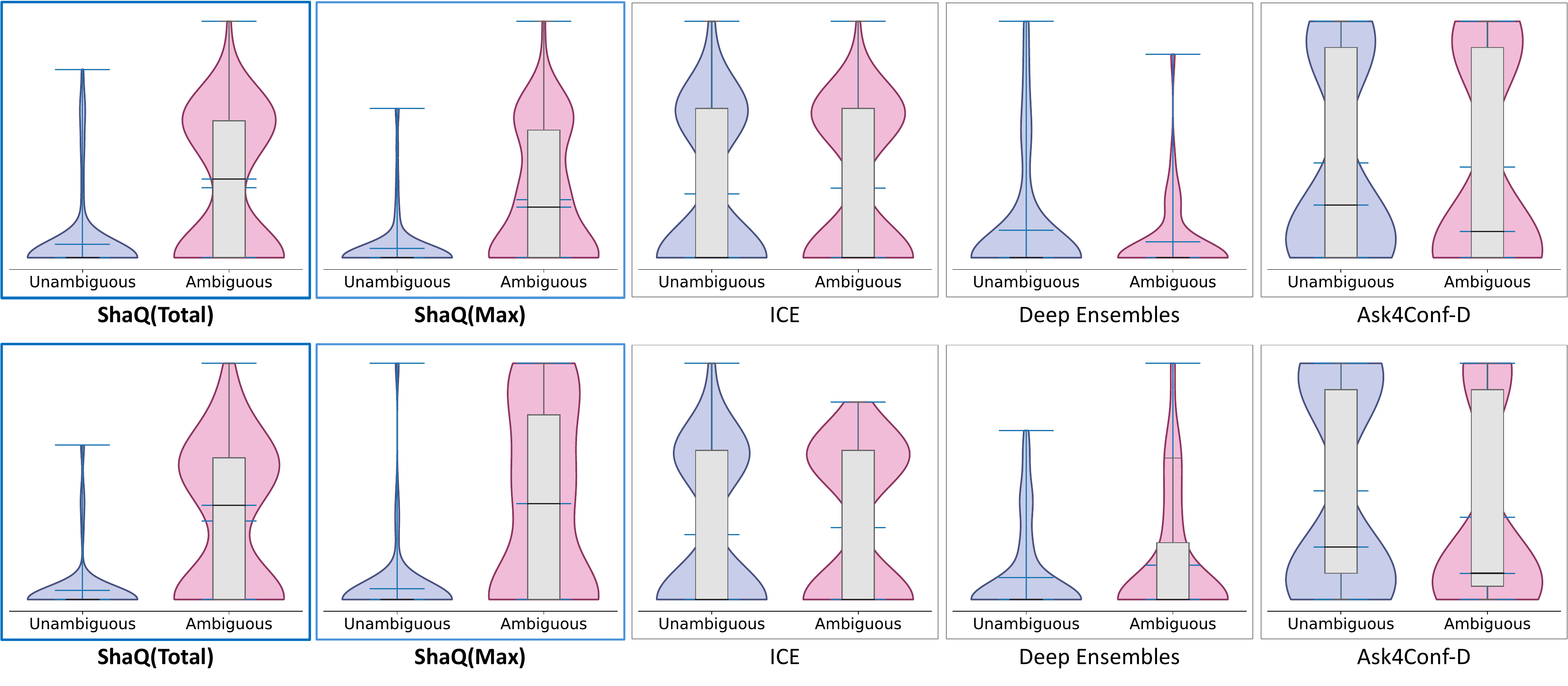}
\caption{Aleatoric uncertainty distributions on AmbiEnt. Upper row: \texttt{Gemini-3.1-flash-preview}, Lower row: \texttt{Gemini-2.5-flash.}}
\label{fig:AmbiEnt_31_25}
\vspace{-1.5em}
\end{figure}

\begin{figure}[t]
\centering
\includegraphics[width=\linewidth]{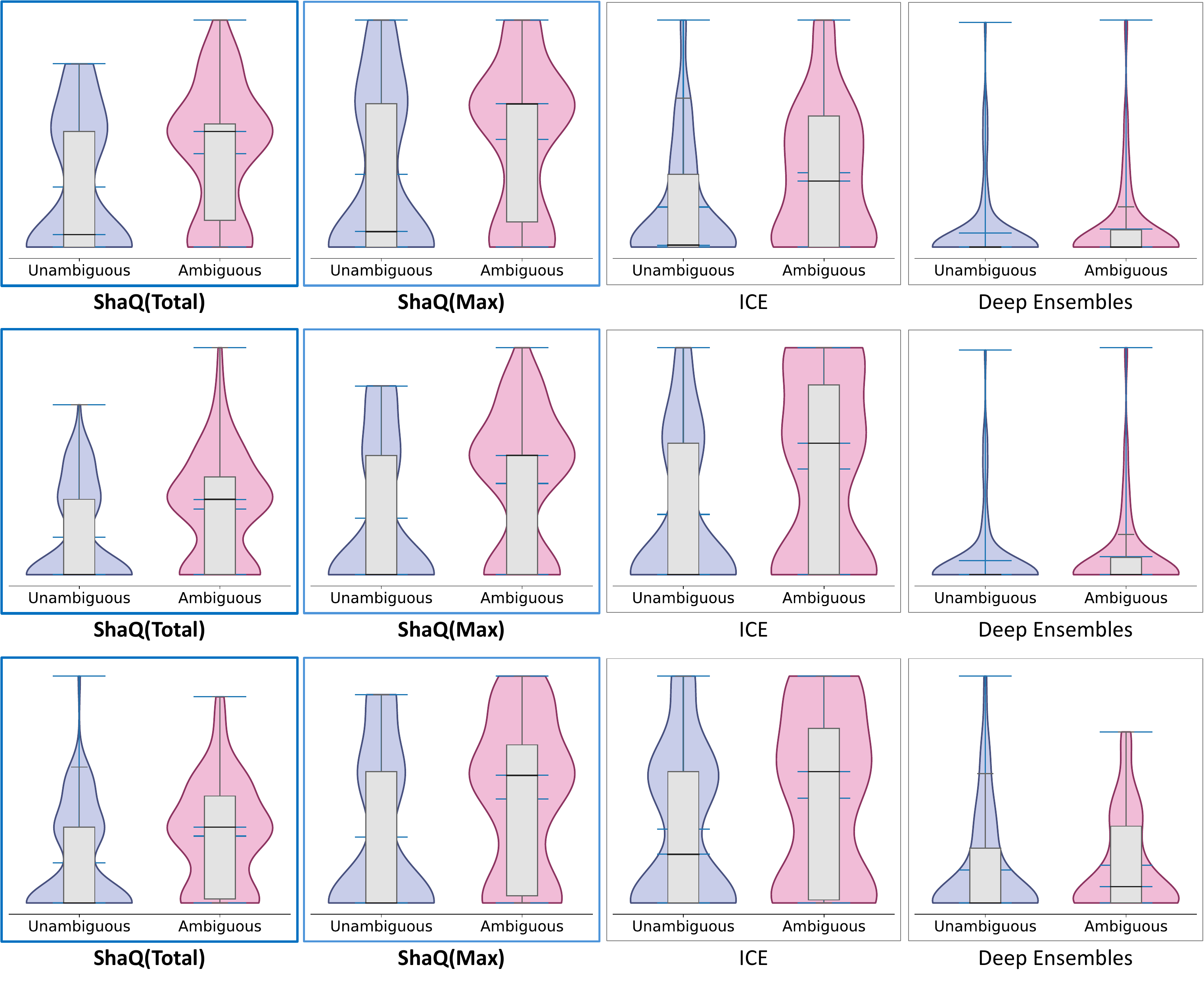}
\caption{Aleatoric uncertainty distributions on AmbigQA. First row: \texttt{GPT-5.4-mini}, Mid row: \texttt{Gemini-3.1-flash-preview}, Last row: \texttt{Gemini-2.5-flash.}}
\label{fig:reasoning_fullplot}
\vspace{-1.5em}
\end{figure}

\clearpage
\newpage

\subsection{Additional Ablation Results on Ambiguity Detection}
\label{sec:Appendix Ablation}
\begin{table}[t]
    \caption{Without semantic clustering and \texttt{Clusterer} result at AmbiEnt.}
    \centering
    \resizebox{0.5\linewidth}{!}{%
        \begin{tabular}{lccc}
\toprule
\rowcolor{gray!8}
\textit{Method} & \textbf{F1 Score} & \textbf{AUROC} & \textbf{AUPRC} \\
\midrule
\multicolumn{4}{l}{\texttt{GPT-5.4-mini}} \\
Direct SE  & 42.20 & 57.56 & 54.36 \\
ICE  & 58.66 & 56.52 & 54.04 \\
Deep Ensembles  & 56.00 & 56.71 & 55.41 \\
LOO  & 66.07 & 49.12 & 57.59 \\
LOO(Max Span)  & 66.07 & 56.92 & 63.24 \\
\midrule
\rowcolor{blue!6}
\textbf{\proposed(Ours, Max Span)}  & \textbf{74.80} & \textbf{78.18} & \underline{75.72} \\
\rowcolor{blue!8}
\textbf{\proposed(Ours)}  & \textbf{74.80} & \underline{78.06} & \textbf{76.30} \\
\midrule
\multicolumn{4}{l}{\texttt{Gemini-3.1-flash-lite-preview}} \\
LOO  & \textbf{66.07} & 53.23 & 55.63 \\
LOO(Max Span)  & 65.47 & 52.94 & 56.20 \\
\midrule
\rowcolor{blue!6}
\textbf{\proposed(Ours, Max Span)}  & \underline{65.59} & \textbf{72.39} & \textbf{70.65} \\
\rowcolor{blue!8}
\textbf{\proposed(Ours)}  & \underline{65.59} & \underline{71.96} & \underline{69.32} \\
\midrule
\multicolumn{4}{l}{\texttt{Gemini-2.5-flash}} \\
LOO  & 66.07 & 38.64 & 44.39 \\
LOO(Max Span)  & 65.47 & 41.48 & 46.22 \\
\midrule
\rowcolor{blue!6}
\textbf{\proposed(Ours, Max Span)}  & \textbf{71.42} & \underline{76.40} & \underline{74.52} \\
\rowcolor{blue!8}
\textbf{\proposed(Ours)}  & \textbf{71.42} & \textbf{76.47} & \textbf{75.78} \\
\bottomrule
\end{tabular}
    }
    \label{tab:Many_Span_Case_Appendix_NLI_ablation}
\end{table}
As an ablation, we perform ambiguity detection on AmbiEnt, an NLI task whose answer space is limited to three labels: entailment, contradiction, and neutral. Since the possible outputs are already defined by these discrete labels, we omit semantic clustering and the \texttt{Clusterer}. The results at Table~\ref{tab:Many_Span_Case_Appendix_NLI_ablation} show the same overall trend as Tables~\ref{tab:main_table} and~\ref{tab:main_table2_NLI}, suggesting that when semantic clustering is unnecessary, the computational cost of LLM-based estimation can be reduced.

\subsection{Additional Results on Multi Span Ambiguity Detection}
\label{sec:Appendix Multi Span}

\paragraph{Full Results of Multi-Span Ambiguity Detection (AmbiEnt).}
\begin{table}[t]
    \caption{Ambiguity detection results on multi-span examples ($|\mathcal{S}| \geq 2$) from AmbiEnt.}
    \centering
    \resizebox{0.5\linewidth}{!}{%
        \begin{tabular}{lccc}
\toprule
\rowcolor{gray!8}
\textit{Method} & \textbf{F1 Score} & \textbf{AUROC} & \textbf{AUPRC} \\
\midrule
\multicolumn{4}{l}{\texttt{GPT-5.4-mini}} \\
Semantic Entropy  & 39.13 & 57.07 & 78.65 \\
Sample Diversity  & 38.29 & 53.66 & 78.40 \\
Sample Repetition  & 45.83 & 60.98 & 82.47 \\
Self-Consistency  & 45.83 & 60.85 & 82.12 \\
Ask4Conf-D  & 83.95 & 40.40 & 78.13 \\
Deep Ensembles  & 58.62 & 45.95 & 75.25 \\
ICE  & 59.64 & 53.40 & 77.54 \\
LOO  & 85.36 & 49.11 & 79.59 \\
LOO(Max Span)  & 85.36 & 51.64 & 80.26 \\
\midrule
\rowcolor{blue!6}
\textbf{\proposed(Ours, Max Span)}  & \textbf{86.95} & \textbf{81.31} & \textbf{92.39} \\
\rowcolor{blue!8}
\textbf{\proposed(Ours)}  & \underline{86.56} & \underline{80.05} & \underline{91.27} \\
\midrule
\multicolumn{4}{l}{\texttt{Gemini-3.1-flash-lite-preview}} \\
Ask4Conf-D  & 58.46 & 33.45 & 73.34 \\
Deep Ensembles  & 50.00 & 40.02 & 73.21 \\
ICE  & 15.00 & 49.49 & 76.92 \\
LOO  & \textbf{85.36} & 63.38 & 84.65 \\
LOO(Max Span)  & \textbf{85.36} & 60.85 & 84.42 \\
\midrule
\rowcolor{blue!6}
\textbf{\proposed(Ours, Max Span)}  & \underline{84.05} & \textbf{74.11} & \textbf{90.80} \\
\rowcolor{blue!8}
\textbf{\proposed(Ours)}  & 82.85 & \underline{69.82} & \underline{85.27} \\
\midrule
\multicolumn{4}{l}{\texttt{Gemini-2.5-flash}} \\
Ask4Conf-D  & 78.94 & 42.29 & 76.05 \\
Deep Ensembles  & 52.63 & 40.02 & 72.00 \\
ICE  & 43.99 & 50.00 & 80.80 \\
LOO  & 85.36 & 41.16 & 73.81 \\
LOO(Max Span)  & 85.36 & 42.17 & 74.17 \\
\midrule
\rowcolor{blue!6}
\textbf{\proposed(Ours, Max Span)}  & \textbf{87.87} & \underline{85.85} & \underline{95.36} \\
\rowcolor{blue!8}
\textbf{\proposed(Ours)}  & \underline{86.95} & \textbf{86.23} & \textbf{95.47} \\
\bottomrule
\end{tabular}
    }
    \label{tab:Many_Span_Case_Appendix_NLI}
\end{table}

Table~\ref{tab:Many_Span_Case_Appendix_NLI} reports the full results on AmbiEnt examples for which the \texttt{localizer} identifies two or more candidate ambiguous spans. In this setting, \proposed and its max-span variant achieve the strongest overall performance across the evaluated backbone models, especially in AUROC and AUPRC. This indicates that the uncertainty scores produced by \proposed provide a more reliable ranking of ambiguous versus unambiguous inputs than the compared baselines.

These results highlight the importance of interaction-aware span attribution in multi-span cases. When several candidate ambiguity sources appear in the same input, the uncertainty contribution of one span may depend on how other spans are interpreted. By averaging marginal contributions over span coalitions, \proposed can capture these interactions more faithfully than methods that rely on aggregate input-level uncertainty or single-context leave-one-out perturbations. The strong performance of the max-span variant further suggests that the largest localized attribution often serves as an effective signal for detecting whether an input contains meaningful ambiguity.

\paragraph{Results of Multi-Span Ambiguity Detection (AmbigQA).}
\begin{table}[t]
\centering
\caption{Ambiguity detection results on multi-span examples ($|\mathcal{S}| \geq 2$) from AmbigQA. LOO: Leave-One-Out; DEEPENS.: Deep Ensembles; A4C: Ask4Conf-D; SE: Semantic Entropy; SDiv: Sample Diversity; SRep: Sample Repetition; SC: Self-Consistency.}
\label{tab:many_span}
\vspace{5pt}
\setlength{\tabcolsep}{2.8pt}
\renewcommand{\arraystretch}{1.08}
\footnotesize

\begin{tabular}{l|ccccccc|cccc}
\toprule
\rowcolor{gray!8}
& \multicolumn{7}{c|}{\textit{Aleatoric Methods}}
& \multicolumn{4}{c}{\textit{Other Uncertainty Methods}} \\
\rowcolor{gray!8}
\textit{Metric}
& \textbf{\proposed}
& \makecell{\textbf{\proposed}\\\textbf{Max}}
& LOO
& \makecell{LOO\\Max}
& ICE
& \makecell{DEEP\\ENS.}
& A4C-D
& SE
& SDiv
& SRep
& SC \\
\midrule
Best F1 Score
& \cellcolor{blue!8}\textbf{84.21}
& \cellcolor{blue!6}\underline{83.87}
& 82.53
& 82.53
& 75.00
& 72.00
& 68.08
& 60.46
& 76.59
& 76.00
& 75.00 \\

\bottomrule
\end{tabular}
\end{table}
\begin{table}[t]
\centering
\caption{Ambiguity detection results on multi-span examples ($|\mathcal{S}| \geq 2$) from AmbigQA with \texttt{GPT-5.4-mini}, \texttt{Gemini-2.5-flash}, and \texttt{Gemini-3.1-flash-preview}. DEEPENS.: Deep Ensembles.}
\label{tab:many_span2}
\vspace{5pt}
\setlength{\tabcolsep}{2.8pt}
\renewcommand{\arraystretch}{1.08}
\footnotesize

\begin{tabular}{l|cccc}
\toprule
\rowcolor{gray!5}
\texttt{Answerer \& Localizer Backbone}& \textbf{\proposed}& \makecell{\textbf{\proposed}\\\textbf{Max}}& ICE& \makecell{DEEP\\ENS.} \\
\midrule
\texttt{gpt-5.4-mini}& \cellcolor{blue!8}\textbf{83.87}& \cellcolor{blue!8}\underline{82.53} & 73.07 & 75.00 \\
\texttt{gemini-3.1-flash-lite-preview}& \cellcolor{blue!8}\underline{84.21}& \cellcolor{blue!8}\textbf{87.27} & 76.92 & 61.90 \\
\texttt{gemini-2.5-flash}& \cellcolor{blue!8}\underline{79.31}& \cellcolor{blue!8}\underline{79.31} & \textbf{80.70} & 68.08 \\
\bottomrule
\end{tabular}
\end{table}
Table~\ref{tab:many_span} reports the best score of the full multi-span results from AmbigQA ($|S| \geq 2$). As a result, the \proposed variants remain the strongest methods: \proposed achieves the
best F1 score, and \proposed Max obtains the second-best score. In contrast, the newly included
output-variation-based estimators, such as Sample Diversity, Sample Repetition, and
Self-Consistency, do not outperform the \proposed variants. This further supports that explicitly
localizing input-induced ambiguity provides a stronger signal than relying only on aggregate
input-level uncertainty or output variability. Table~\ref{tab:many_span2} reports the multi-span results from AmbigQA ($|S| \geq 2$) with recently released reasoning-capable models, including \texttt{GPT-5.4-mini}, \texttt{Gemini-2.5-flash}, and \texttt{Gemini-3.1-flash-preview}. Across these backbones, \proposed and its
max-span variant achieve the best average performance among the compared aleatoric methods.

\newpage
\clearpage
\begin{figure}[t]
\vspace{1.5em}
\centering
\includegraphics[width=0.8\linewidth]{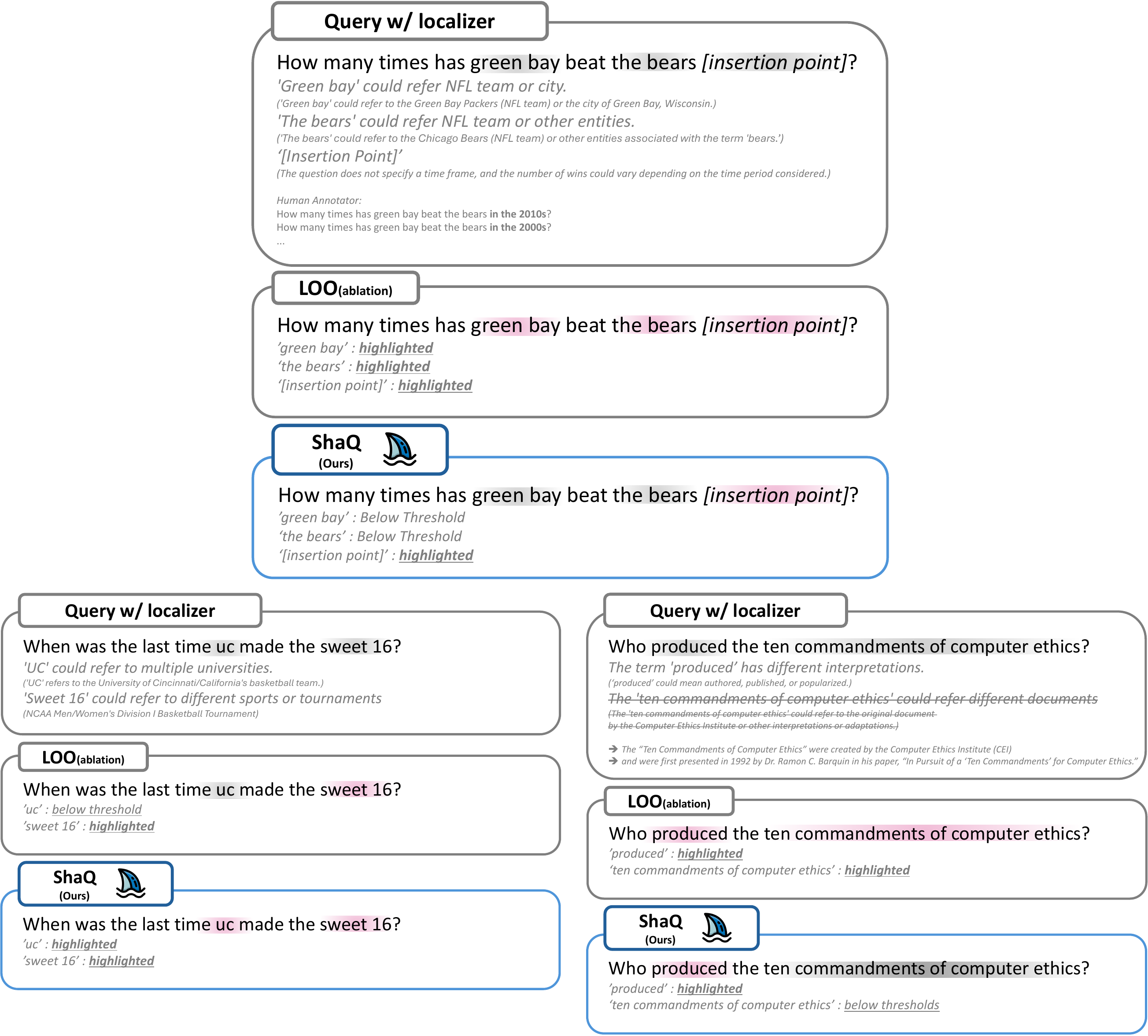}
\caption{Qualitative Result of multi span ambiguous questions on AmbigQA (GPT-4). Highlighted span is colored with pink.}
\label{fig:Case Report}
\vspace{-1.5em}
\end{figure}
\newpage
\begin{figure}[t]
\vspace{1.5em}
\centering
\includegraphics[width=\linewidth]{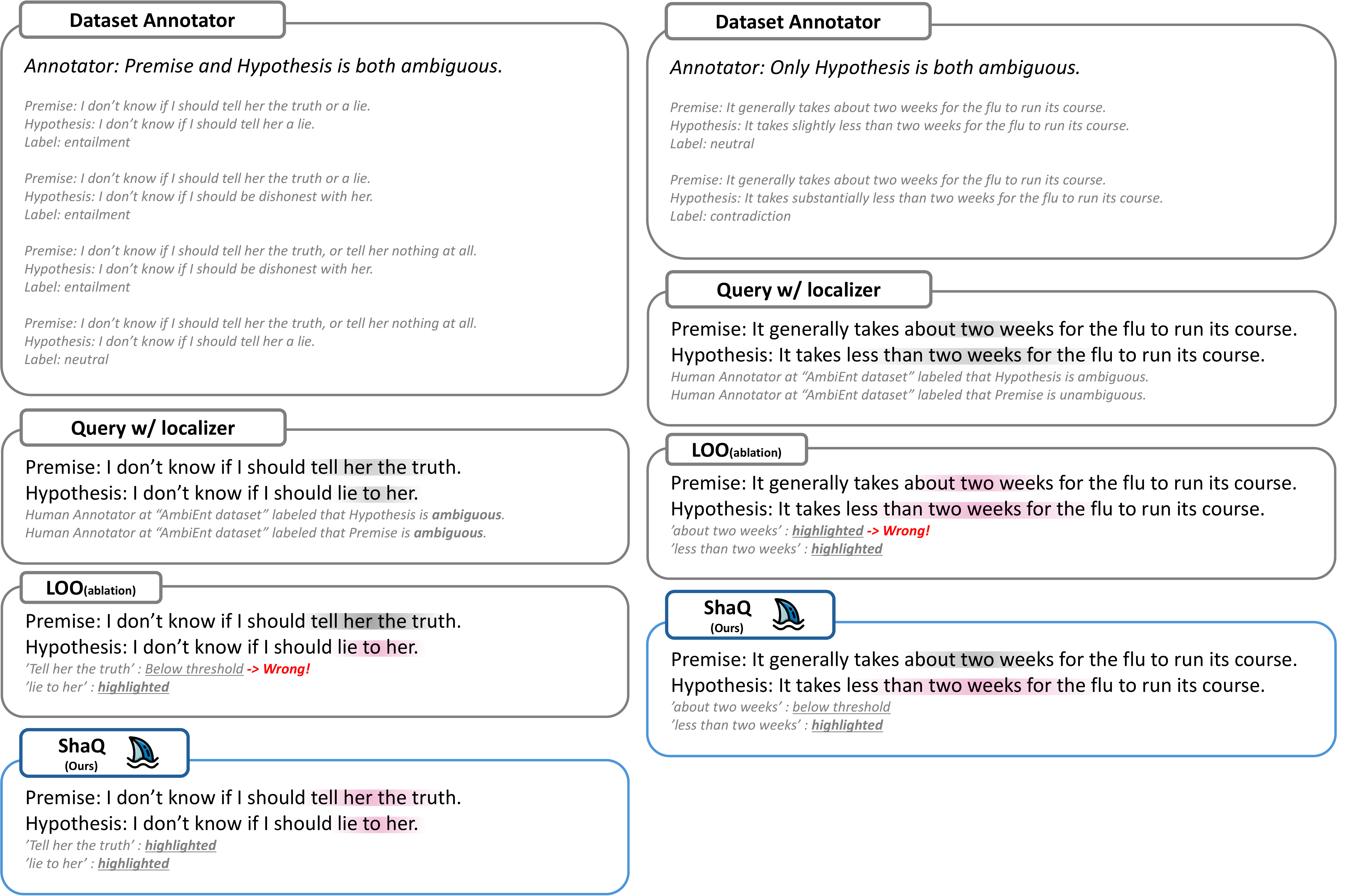}
\caption{Qualitative Result of multi span ambiguous questions on AmbiEnt (GPT-5.4-mini). Highlighted span is colored with pink.}
\label{fig:Case Report AmbiEnt}
\vspace{-1.5em}
\end{figure}

\paragraph{Qualitative Analysis on AmbigQA}

We further conduct a qualitative evaluation of \proposed on three representative multi-span cases identified by the AmbigQA localizer: 
\begin{itemize}[leftmargin=1.5em, itemsep=0pt, topsep=1pt]
\item (1) cases in which all detected spans must be revised to resolve the ambiguity 
(Figure~\ref{fig:Case Report}, bottom left)
\item (2) cases in which revising a single span resolves the ambiguity associated with the remaining span(s) 
(Figure~\ref{fig:Case Report}, top)
\item (3) cases in which the \texttt{localizer} incorrectly identifies an unambiguous span as ambiguous 
(Figure~\ref{fig:Case Report}, bottom right).
\end{itemize}

For case (1), \proposed successfully highlights both spans requiring clarification, while LOO fails to identify them correctly. 
For case (2), \proposed highlights the appropriate insertion point where adding information resolves the ambiguity of the other span; this behavior is consistent with the AmbigQA human annotation, which also resolves the ambiguity by introducing additional information. In contrast, LOO highlights all detected spans. 
For case (3), despite the \texttt{localizer} falsely marking a clear span as ambiguous, \proposed avoids highlighting the unambiguous proper noun ``ten commandments of computer ethics'' and instead highlights only the genuinely ambiguous part, whereas LOO highlights the clear span as well. 
This indicates that, although the initial localization step relies on an LLM and may over-identify candidate spans, \proposed can assign near-zero uncertainty contribution to spans whose clarification does not reduce predictive uncertainty.

These examples show that \proposed can help filter out unnecessary or incorrectly localized spans by estimating their actual contribution to predictive uncertainty, even when the \texttt{localizer} over-identifies candidate spans or marks clear expressions as ambiguous. Overall, this suggests that \proposed provides more precise and reliable fine-grained aleatoric uncertainty quantification than LOO.



\paragraph{Qualitative Analysis on AmbiEnt}
In AmbiEnt, ambiguity may arise independently from either the premise or the hypothesis. 
We therefore conduct a qualitative evaluation of \proposed on two representative multi-span cases identified by the AmbiEnt localizer:
\begin{itemize}[leftmargin=1.5em, itemsep=0pt, topsep=1pt]
\item (1) cases in which all detected spans must be revised to resolve the ambiguity 
(Figure~\ref{fig:Case Report AmbiEnt}, left);
\item (2) cases in which the \texttt{localizer} incorrectly identifies an unambiguous span as ambiguous 
(Figure~\ref{fig:Case Report AmbiEnt}, right).
\end{itemize}

Although neither AmbiEnt nor AmbigQA provides span-level ambiguity annotations, AmbiEnt includes annotations indicating whether the premise and hypothesis are ambiguous, respectively. 
We use these annotations as a weak reference for evaluating span localization: if the annotator highlights a region in the premise or hypothesis, we regard the corresponding sentence component as ambiguous; otherwise, we treat it as unambiguous. 
This allows us to compare whether the spans highlighted by each method are consistent with the ambiguity labels assigned at the premise and hypothesis level.

For case (1), \proposed successfully highlights both spans that require clarification, whereas LOO fails to identify them correctly. 
For case (2), despite the \texttt{localizer} falsely marking the unambiguous span ``about two weeks'' as a candidate ambiguous span, \proposed avoids assigning a high uncertainty score to this clear span and instead highlights only the genuinely ambiguous part. 
In contrast, LOO also highlights ``about two weeks'', suggesting that it is more vulnerable to false-positive candidates produced by the localization stage. 
These results indicate that, although the initial localization step relies on an LLM and may over-identify candidate spans, \proposed can assign near-zero uncertainty contribution to spans whose clarification does not reduce predictive uncertainty. 
This behavior demonstrates the robustness of \proposed as a post-localization uncertainty attribution method, particularly in multi-span settings where ambiguous and unambiguous candidate spans may coexist.

\paragraph{Shapley-style interpretation of multi-span cases.}
The qualitative examples can be interpreted through the Shapley-style marginal contribution view underlying \proposed. 
In our formulation, each localized span is treated as a player for attribution purposes, and its score is determined not by whether the span appears ambiguous in isolation, but by its average marginal reduction of predictive entropy across clarification contexts. 
This perspective helps explain several recurring multi-span patterns.

First, when the \texttt{localizer} includes an unambiguous span together with a genuinely ambiguous span, the unambiguous span behaves as a near-dummy player: clarifying it produces little or no reduction in predictive entropy under most contexts. 
This pattern appears in AmbigQA Case (3) and AmbiEnt Case (2), where the localizer over-identifies a clear span as a candidate ambiguity. 
For example, in AmbiEnt Case (2), the span ``about two weeks'' is initially detected as a candidate ambiguous span, but clarifying it does not meaningfully reduce predictive uncertainty. 
\proposed therefore assigns near-zero attribution to such spans, even if they were initially marked as candidates by the localizer.

Second, when multiple spans are complementary sources of ambiguity and the input becomes clear only after they are jointly clarified, the uncertainty reduction is shared across the corresponding spans. 
This pattern corresponds to AmbigQA Case (1) and AmbiEnt Case (1), where all detected spans must be revised to fully resolve the ambiguity. 
In this case, no single span fully accounts for the ambiguity in isolation, and the Shapley-style averaging distributes the joint uncertainty according to each span's marginal contribution across different clarification contexts.

Third, when multiple spans are partially redundant--that is, clarifying one span can substantially reduce the ambiguity associated with the others--the attribution need not be uniformly split. 
This pattern is illustrated by AmbigQA Case (2), where revising a single span or insertion point can resolve ambiguity associated with the remaining span(s). 
If the spans are symmetric in their effects on the answer distribution, the attribution is expected to be shared. 
However, if one span induces a richer set of plausible interpretations or yields larger entropy reductions across more contexts, \proposed assigns a larger attribution to that span. 
Thus, concentrated attribution reflects an asymmetric contribution to the model's predictive uncertainty, rather than an artifact of the localizer.

\newpage
\clearpage
\subsection{Additional Results on Uncertainty-guided Clarification}  \label{sec:Appendix Clarification}
\begin{table}[t]
\centering

\begin{minipage}{0.48\textwidth}
\centering
\caption{High \proposed Max-Span Aleatoric Uncertainty Set ($> 1$).}
\label{tab:appendix_max_span}
\resizebox{\linewidth}{!}{%

\begin{tabular}{lccc}
\toprule
\rowcolor{gray!8}
\textit{Method} & Clarified & $\Delta H$($\uparrow$) & Edit ($\downarrow$)\\
\midrule
Semantic Entropy  &  0.3429 & 0.1115 & 5.8571 \\
ICE  &  0.5397 & -0.0852 & 7.2857\\
Deep Ensembles  &  0.5245 & -0.0700 & 7.3571\\
\midrule
\rowcolor{blue!8}
\textbf{\proposed(Ours)}&   \textbf{0.3296} & \textbf{0.1248} & \textbf{4.9285}\\
\bottomrule
\end{tabular}

}
\end{minipage}
\hfill
\begin{minipage}{0.48\textwidth}
\centering
\caption{High \proposed Max-Span Aleatoric Uncertainty Set (top 20\%).}
\label{tab:max_span}
\resizebox{\linewidth}{!}{%

\begin{tabular}{lccc}
\toprule
\rowcolor{gray!8}
\textit{Method} & Clarified & $\Delta H$($\uparrow$) & Edit ($\downarrow$)\\
\midrule
Semantic Entropy  &  0.3165 & 0.0654 & 6.7142 \\
ICE  &  0.4872 & -0.1051 & 7.1428\\
Deep Ensembles &  0.4426 & -0.0605 & 7.1428\\
\midrule
\rowcolor{blue!8}
\textbf{\proposed(Ours)}&   \textbf{0.3159} & \textbf{0.0661} & \textbf{5.0476}\\
\bottomrule
\end{tabular}

}
\end{minipage}

\vspace{6pt}

\begin{minipage}{0.48\textwidth}
\centering
\caption{High Baseline Max-Total Aleatoric Uncertainty Set (top 30).}
\label{tab:appendix_High_Baseline}
\resizebox{\linewidth}{!}{%

\begin{tabular}{lccc}
\toprule
\rowcolor{gray!8}
\textit{Method} & Clarified & $\Delta H$($\uparrow$) & Edit ($\downarrow$)\\
\midrule
Semantic Entropy  &  0.3374 & -0.0174 & 6.1333 \\
ICE  &  0.3477 & -0.0277 & 6.6333\\
Deep Ensembles  &  0.3050 & 0.0149 & 7.5000\\
\midrule
\rowcolor{blue!8}
\textbf{\proposed(Ours)}&   \textbf{0.2848} & \textbf{0.0351} & \textbf{5.5333}\\
\bottomrule
\end{tabular}

}
\end{minipage}
\hfill
\begin{minipage}{0.48\textwidth}
\centering
\caption{High Baseline Max-Total Aleatoric Uncertainty Set (top 20\%).}
\label{tab:baseline_aleatoric}
\resizebox{\linewidth}{!}{%

\begin{tabular}{lccc}
\toprule
\rowcolor{gray!8}
\textit{Method} & Clarified & $\Delta H$($\uparrow$) & Edit ($\downarrow$)\\
\midrule
Semantic Entropy  &  0.3489 & 0.0066 & 6.1428 \\
ICE  &  0.3477 & 0.0207 & 6.7142\\
Deep Ensembles  &  0.3239 & 0.0315 & 7.8095\\
\midrule
\rowcolor{blue!8}
\textbf{\proposed(Ours)}&  \textbf{0.3008} & \textbf{0.0547} & \textbf{5.8571}\\
\bottomrule
\end{tabular}

}
\end{minipage}

\vspace{6pt}

\begin{minipage}{0.48\textwidth}
\centering
\caption{High Baseline Uncertainty Disagreement Set.}
\label{tab:baseline_uncertainty}
\resizebox{\linewidth}{!}{%
\begin{tabular}{lccc}
\toprule
\rowcolor{gray!8}
\textit{Method} & Clarified & $\Delta H$($\uparrow$) & Edit ($\downarrow$)\\
\midrule
Semantic Entropy  & 0.1529 & 0.0199 & 5.5714 \\
ICE  & 0.1676 & 0.0052 & 6.0952\\
Deep Ensembles  & 0.1758 & -0.0029 & 6.7619\\
\midrule
\rowcolor{blue!8}
\textbf{\proposed(Ours)}& \textbf{0.1355} & \textbf{0.0373} & \textbf{5.0476}\\
\bottomrule
\end{tabular}

}
\end{minipage}

\vspace{-8pt}
\end{table}

\paragraph{Breakdown by Selection Criterion.}
We provide a breakdown of uncertainty-guided clarification results under three complementary
high-uncertainty settings. The first two settings correspond to the two selection criteria used to
construct the evaluation superset in the Section~\ref{subsec:uncertainty_guidance}: examples selected by high \proposed max-span
aleatoric uncertainty and examples selected by high baseline max-total aleatoric uncertainty. The
third setting considers examples where baseline uncertainty estimators disagree with one another.
Together, these settings allow us to analyze whether \proposed remains effective across cases
identified by its own localized uncertainty signal, cases identified by conventional uncertainty
estimators, and cases where existing estimators provide unstable guidance.

\paragraph{High \proposed Max-Span Aleatoric Uncertainty Set.}
This setting focuses on gold ambiguous examples for which \proposed assigns high maximum
span-level aleatoric uncertainty. We consider two variants: a threshold-based subset where the
maximum span uncertainty is greater than 1, reported in Table~\ref{tab:appendix_max_span}, and
the top-20\% subset used in the main-text superset construction, reported in
Table~\ref{tab:max_span}. These examples represent cases where \proposed identifies a particularly
strong localized source of ambiguity within the input.

As shown in Tables~\ref{tab:appendix_max_span} and~\ref{tab:max_span}, \proposed-guided
clarification achieves the largest entropy reduction while requiring the fewest edits. This indicates
that the examples strongly flagged by \proposed are not merely high-scoring artifacts, but genuinely
ambiguous cases for which localized clarification is especially useful. In this regime,
baseline-guided rewrites often require larger modifications and may even fail to reduce entropy,
suggesting that these methods do not reliably identify the specific part of the input that should be
clarified. Thus, high max-span uncertainty under \proposed captures fine-grained ambiguous samples
that are difficult for aggregate uncertainty methods to handle.

\paragraph{High Baseline Max-Total Aleatoric Uncertainty Set.}
This setting focuses on gold ambiguous examples that existing uncertainty estimators strongly flag
as uncertain. We consider both a top-30 subset, reported in Table~\ref{tab:appendix_High_Baseline},
and the top-20\% subset used in the main-text superset construction, reported in
Table~\ref{tab:baseline_aleatoric}. These examples represent cases that conventional input-level or
output-level uncertainty estimators regard as difficult.

As shown in Tables~\ref{tab:appendix_High_Baseline} and~\ref{tab:baseline_aleatoric}, \proposed
again achieves the strongest clarification performance, reducing entropy more effectively while
requiring fewer edits. This shows that \proposed is not only effective on samples selected by its own
uncertainty score. Even when the evaluation examples are chosen according to baseline uncertainty
signals, \proposed can identify the localized source of ambiguity and provide more useful information
for clarification. In other words, for examples that baselines can detect as uncertain, \proposed can
still provide better guidance on how the input should be revised.

\paragraph{High Baseline Disagreement Set.}
We further evaluate on a challenging subset of gold ambiguous examples where baseline uncertainty
methods disagree, selected by the top 20\% standard deviation of baseline uncertainty scores. This
subset captures cases in which conventional input-level or output-level uncertainty estimators provide
unstable or conflicting signals, making it difficult to determine which uncertainty estimate should
guide clarification.

As shown in Table~\ref{tab:baseline_uncertainty}, \proposed achieves the largest entropy reduction
while requiring the smallest edit distance. This indicates that, even when existing uncertainty
estimators disagree, \proposed can still provide a more actionable ambiguity signal for clarification.
In particular, localized span-level uncertainty helps identify the specific ambiguity that should be
resolved, rather than relying on a single global uncertainty score that may be sensitive to the choice
of estimator.

\subsection{Additional Results on Human-to-Human Medical Multi-turn Dialogue}
\label{sec:Appendix Meditod}
\paragraph{Dataset.}
We evaluate on MediTOD~\citep{meditod_saley_emnlp}, a publicly available English dataset of staged doctor-patient interviews covering respiratory and musculoskeletal specialties, comprising 213 dialogues with 22,503 annotated utterances. Each dialogue follows the Objective Structured Clinical Examination (OSCE) format, in which medical professionals enact both doctor and patient roles to systematically elicit a patient history. Dialogues are highly conversational, averaging 96 utterances per dialogue, and are annotated with comprehensive intent and slot-attribute labels under the Comprehensive Medical Attribute Schema (CMAS).

\paragraph{Experimental Setup.}
A key distinction of this evaluation is that \proposed is applied \emph{online}: at each utterance, only the preceding dialogue history is provided as context, and future utterances are deliberately withheld from the prompt. This mirrors the real-world constraint that an interlocutor must assess potential misunderstanding in real time, without foreknowledge of how ambiguity will be resolved. Concretely, \proposed localizes ambiguous spans within each utterance conditioned on the accumulated dialogue history, and the detected spans are logged at every turn. We use GPT-4.1 as the base model throughout. Notably, the vast majority of utterances in this setting are declarative statements rather than queries, which highlights a key strength of \proposed: because its core mechanism operates by simulating clarification of an ambiguous span, it is agnostic to the surface form of the input and applies equally to questions, statements, and other utterance types. To accommodate this, we adopt a generalized version of \proposed in which the Answerer, rather than generating an answer to a query, rewrites the current declarative utterance by embedding the meaning of a given premise directly into the statement -- leaving all other components unchanged. Detailed prompts are provided in prompts~\ref{fig:medical_localizer}~\ref{fig:medical_premise_generator}~\ref{fig:medical_answerer}~\ref{fig:medical_cluster}.

\paragraph{Practical Significance.}
This setting exposes a practically important capability of \proposed: rather than performing a one-shot analysis of a complete document, it can serve as a \emph{turn-level ambiguity monitor} that alerts interlocutors to potentially problematic spans as a conversation unfolds. In the medical domain, where a misunderstood symptom descriptor or temporal qualifier can have serious downstream consequences for diagnosis and treatment, the ability to flag ambiguous spans at each turn -- before the conversation proceeds -- represents a concrete and high-value application beyond the LLM input setting for which \proposed was originally designed.

\newpage
\clearpage
\begin{figure}[t]
    \centering
    \includegraphics[width=0.9\linewidth]{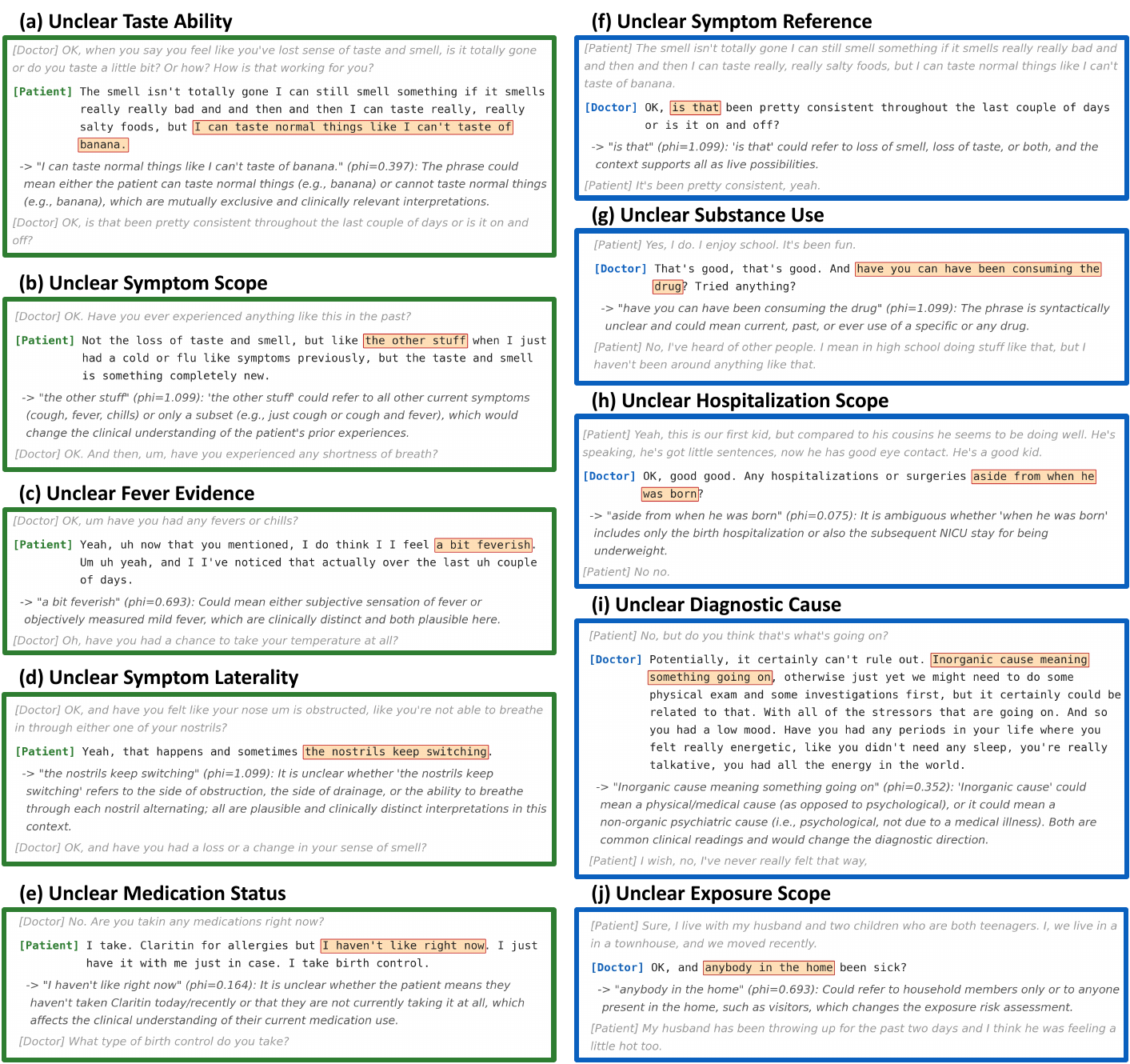}
    \caption{\textbf{Extensive Analysis For MediTOD Dialogue Set.}}
    \label{fig:Appendix_Medical}
\end{figure}
\paragraph{Extensive Medical Dialogue Case Study.}
Figure~\ref{fig:Appendix_Medical} presents ten representative cases spanning qualitatively distinct ambiguity types, demonstrating the breadth of clinically significant uncertainty that \proposed is able to localize. We organize our observations as follows.

\textbf{Diversity of ambiguity types.}
The detected spans exhibit a wide variety of linguistic and clinical ambiguity. \textit{Lexical ambiguity} arises when a single expression admits multiple medically distinct readings: in~(c), \textit{``a bit feverish''} is ambiguous between a subjective thermal sensation and a measurable low-grade fever, and in~(e), \textit{``I haven't like right now''} is ambiguous between a temporary lapse in medication adherence and complete discontinuation of Claritin. \textit{Referential ambiguity} occurs when the antecedent of a referring expression is underspecified: in~(f), the demonstrative \textit{``is that''} could resolve to loss of smell, loss of taste, or both, with the surrounding context supporting all three readings simultaneously. \textit{Scope ambiguity} manifests in~(b), where \textit{``the other stuff''} is unclear as to whether it encompasses all co-occurring symptoms or only a subset, materially affecting the clinical interpretation of the patient's prior illness history, and in~(j), where \textit{``anybody in the home''} is ambiguous between household members and any recent visitor, altering the exposure risk assessment. \textit{Syntactic ambiguity} appears in~(g), where \textit{``have you can have been consuming the drug''} is structurally malformed in a way that leaves the temporal scope--current, past, or lifetime use--and the referent--a specific drug or any drug--simultaneously underspecified.

\textbf{Ambiguity in both speakers.}
Notably, \proposed detects ambiguous spans in the utterances of \textit{both} the patient and the doctor, underscoring that miscommunication risk is not unidirectional. Patient-side cases include~(a), (b), (c), (d), and~(e), where vague or hedged self-reports carry clinically consequential ambiguity. Doctor-side cases include~(f), (g), (h), (i), and~(j), where imprecise clinical phrasing risks eliciting misaligned patient responses. For instance, in~(h), the physician's exclusion clause \textit{``aside from when he was born''} is ambiguous as to whether it covers only the birth hospitalization or also a subsequent NICU admission for being underweight, a distinction that could alter the reported medical history. In~(i), the term \textit{``Inorganic cause meaning something going on''} conflates a physical/medical etiology with a non-organic psychiatric one, representing a framing that could bias the patient's subsequent response.

\textbf{Summary.}
Across all ten cases, the identified spans are clinically meaningful: misinterpretation of any one of them could lead to an incorrect diagnosis, an inappropriate treatment recommendation, or a missed follow-up question. These results collectively demonstrate that \proposed generalizes well beyond the LLM question-answering setting for which it was designed, offering a principled and interpretable mechanism for turn-level ambiguity monitoring in real-world clinical dialogue.

\section{Computational Analysis}\label{sec:appendix_compute}
The primary computational bottleneck of \proposed lies in the \texttt{Answerer} calls required for all joint clarifications. While concatenating multiple joint clarifications into a single LLM call may seem a natural shortcut, it incurs additional cross-attention computation among the concatenated premise sets, whose cost grows quadratically with the combined sequence length. Instead, since all joint clarification combinations share the same query $x$ as a common prefix, each call effectively reduces to processing the premise suffix alone -- a property that most modern LLM inference engines exploit via \textit{prefix KV-caching}~\citep{prefixKVcache_kwon_acm}, caching the key -- value representations of $x$ and reusing them across calls. Consequently, \proposed computes Shapley values by varying only the premise clarifications in the suffix across calls, leaving the shared prefix untouched. Furthermore, beyond inference optimizations, the overall computational cost of \proposed can be significantly reduced by employing lightweight LLMs. As demonstrated in Tables~\ref{tab:main_table2} and \ref{tab:main_table2_NLI}, we evaluated \proposed using smaller, cost-efficient models such as \texttt{GPT-5.4-mini}, \texttt{Gemini-3.1-flash-lite-preview}, and \texttt{Gemini-2.5-flash} for the \texttt{Answerer} and \texttt{Clusterer} backbones. Despite the reduced model capacity, \proposed consistently outperforms baselines, demonstrating that our framework remains highly effective and computationally practical even in resource-constrained settings.

\paragraph{Compute resources and API budget.}
All LLM inference in our experiments was conducted through API-based inference via OpenRouter.
We did not run any backbone models locally. Local compute was used only for experiment orchestration,
prompt construction, response parsing, aggregation, and metric computation. The experiments were
orchestrated from a CPU machine with 64 logical CPU cores and approximately 1 TiB RAM; no
local GPU was used. Unless otherwise stated, runs used \texttt{max\_workers=5} for
example-level parallel API orchestration. For repeated-sampling calls, multiple requested
samples were issued as repeated single-completion API calls, with concurrency subject to the
same orchestration setting and OpenRouter rate limits.

We report compute in terms of OpenRouter chat-completion requests, which is the most relevant
reproducibility unit in our API-based setting. The AmbigQA main GPT-4 experiments used
approximately 20.2k API requests and took about 25 minutes of wall-clock time, excluding one
optional fresh LOO rerun; including the fresh LOO rerun adds approximately 7.9k requests and
13 minutes. The AmbiEnt main GPT-5.4-mini experiments used approximately 17.8k requests and
took about 1.0 hour. The Gemini backbone ablations used approximately 84.3k requests in total:
47.4k requests for AmbiEnt and 36.9k requests for AmbigQA, with about 2.2 hours of aggregate
wall-clock time. The clarification simulation used approximately 9.4k requests and took about
15 minutes. MediTOD was used only as a qualitative case study over a small subset of dialogue
examples, and its API budget was not separately logged. Exact provider-side token accounting was
not persisted for all runs; where token counts are needed, they can be reconstructed approximately
from the saved prompts and raw completions.

\section{Limitations and Future Directions}\label{sec:appendix_limitations}
While \proposed provides a principled and mathematically rigorous framework for localizing input-induced uncertainty, it opens several promising avenues for future exploration. In our current formulation, we generate premises for each ambiguous span independently and employ a bottom-up marginalization strategy to compute Shapley values. This design choice aligns with the \textit{Marginal} (or interventional) Shapley value approach widely used in feature attribution. By doing so, we ensure mathematical tractability, guarantee non-negative marginal information gains, and maintain computational efficiency without the overhead of complex conditional sampling.

However, in highly intricate natural language contexts, the plausible interpretations of one span can be tightly coupled with those of another. A compelling future direction is to explicitly model these semantic correlations by introducing a \textit{Premise Generation Checker} to dynamically filter logically inconsistent premise combinations during the sampling phase. This naturally extends our framework into the realm of \textit{Conditional} Shapley estimation. Just as the broader machine learning community actively debates the trade-offs between Marginal and Conditional Shapley values—balancing the causal isolation of individual features against strict adherence to the data manifold—this discourse can be elegantly extended to ambiguity localization in LLMs. Investigating how such conditional premise sampling impacts both the attribution fidelity and the computational cost of span-level uncertainty remains an exciting frontier for future work.

\section{Broader Impacts}
\label{sec:broader_impacts}
This work addresses a fundamental limitation of existing uncertainty quantification methods for LLMs: the inability to identify which part of an input drives predictive uncertainty. By providing span-level uncertainty attribution, \proposed enables more transparent and actionable human-AI interaction, which is especially valuable in real-world deployments where improving reliability must rely on resolving input ambiguity rather than modifying model parameters. The positive societal impacts of \proposed are most pronounced in high-stakes domains. As demonstrated in our MediTOD evaluation, \proposed can serve as a turn-level ambiguity monitor in clinical dialogues, flagging underspecified symptom descriptors or temporal qualifiers before a conversation proceeds. In such settings, targeted localization of ambiguous spans can help prevent misdiagnoses or inappropriate treatment recommendations arising from miscommunication between patients and clinicians. More broadly, \proposed provides a principled mechanism for LLMs to communicate where users should focus their clarification efforts, reducing unnecessary user effort while improving the reliability of downstream decisions. Beyond the medical domain, \proposed has potential applications wherever LLMs are deployed as decision-support tools and input quality directly affects output reliability -- including legal document review, customer service, and educational tutoring systems. In these contexts, fine-grained uncertainty localization can enhance human oversight by drawing attention to the specific portions of an input that most warrant revision or verification. As with any method that automatically identifies ambiguous content in user inputs, \proposed should be deployed with appropriate human oversight, particularly in sensitive domains where incorrect localization could mislead users. We recommend that practitioners treat \proposed's span-level attributions as decision-support signals rather than definitive judgments, and that the system be evaluated on domain-specific data before deployment in high-stakes settings.

\section{Asset Licenses and Terms of Use}
\label{Asset Licenses and Terms of Use}

We use only existing datasets, baselines, and proprietary model APIs for research evaluation.
All datasets, baselines, and prior methods used in this work are credited through citations in the
main paper and references. We use AmbigQA~\citep{ambigqa_sewon_emnlp},
AmbiEnt~\citep{ambient-liu-etal-2023-afraid}, and
MediTOD~\citep{meditod_saley_emnlp} only for non-commercial academic research and evaluation,
following their intended research use. We do not redistribute the original datasets; instead, we
provide instructions and scripts for obtaining them from their official sources.

AmbigQA/AmbigNQ is based on NQ-Open and is distributed for research use; the HuggingFace
mirror lists the dataset under CC BY-SA 3.0. AmbiEnt is released under CC BY 4.0 according to
its official repository. MediTOD is an existing medical dialogue benchmark constructed from
simulated doctor-patient dialogues; its source dialogues are described as simulated and publicly
available under CC0 in the original dataset paper. We cite the original creators of all datasets and
respect the corresponding license and terms-of-use requirements.

For baselines and prior methods, including ICE, Deep Ensembles, Ask4Conf-D, Semantic Entropy,
Sample Diversity, Sample Repetition, and Self-Consistency, we cite the original papers and
implement or adapt them only for comparative research evaluation.

For proprietary LLMs, we accessed GPT-based and Gemini-based models through OpenRouter, a
third-party API gateway for LLM providers, rather than directly through individual provider APIs.
Specifically, our experiments used GPT-4, GPT-5.4-mini, Gemini-2.5-flash, and
Gemini-3.1-flash-lite-preview. These models were used only for inference in our experimental
pipeline, including span localization, premise generation, answer generation, semantic clustering,
and baseline evaluation. Our use of these models is subject to the OpenRouter Terms of Service
and, where applicable, the terms of service, usage policies, and data-handling policies of the
underlying model providers.

We do not redistribute proprietary model weights, modify or fine-tune the underlying proprietary
models, or claim ownership of the models themselves. Generated outputs are used solely for
research evaluation and analysis. We will comply with any additional attribution, safety, usage,
and publication requirements specified by OpenRouter and the corresponding model providers.

\newpage
\begin{figure}[t]
    \centering
    \input{Prompts/Localizer_Prompt}
    \caption{\textbf{Prompt used for the Localizer.}}
    \label{fig:localizer_prompt}
\end{figure}

\begin{figure}[t]
    \centering
    \input{Prompts/Premiser_Prompt}
    \caption{\textbf{Prompt used for the Premise Generator.}}
    \label{fig:premiser_prompt}
\end{figure}

\begin{figure}[t]
    \centering
    \input{Prompts/Answerer_prompt}
    \caption{\textbf{Prompt used for the Answerer.}}
    \label{fig:answerer_prompt}
\end{figure}

\begin{figure}[t]
    \centering
    \input{Prompts/Clusterer_Prompt}
    \caption{\textbf{Prompt used for the Clusterer.}}
    \label{fig:Clusterer_prompt}
\end{figure}

\begin{figure}[t]
    \centering
    \input{Prompts/Ask4Conf_D}
    \caption{\textbf{Prompt used for Ask4Conf-D.}}
    \label{fig:ask4conf_prompt}
\end{figure}

\begin{figure}[t]
    \centering
    \input{Prompts/Quantified_Uncertainty_Based_Question_Clarification}
    \caption{\textbf{Prompt used for the Uncertainty-guided Clarification.}}
    \label{fig:Uncertainty-guided Clarification_prompt}
\end{figure}

\begin{figure}[t]
    \centering
    \input{Prompts/Localized_Uncertainty_Based_Question_Clarification}
    \caption{\textbf{Prompt used for the Uncertainty-guided Clarification with \proposed.}}
    \label{fig:Localized Uncertainty-guided Clarification_prompt}
\end{figure}

\begin{figure}[t]
    \centering
    \input{Prompts/Uncertainty_Context}
    \caption{\textbf{Uncertainty Context format used for the Uncertainty-guided Clarification.}}
    \label{fig:Uncertainty Context}
\end{figure}

\begin{figure}[t]
    \centering
    \input{Prompts/Localization_Context}
    \caption{\textbf{Fine-grained Uncertainty Context format used for the Uncertainty-guided Clarification with \proposed.}}
    \label{fig:Localized Context}
\end{figure}

\newpage
\begin{figure}
    \centering
    \input{Prompts/Medical_Localizer}
    \caption{\textbf{Prompt for the Localizer}: identifies ambiguous spans in the current utterance given the dialogue history.}
    \label{fig:medical_localizer}
\end{figure}

\begin{figure}
    \centering
    \input{Prompts/Medical_Premise_Generator}
    \caption{\textbf{Prompt for the Premise Generator}: produces mutually exclusive clinical interpretations for each ambiguous span.}
    \label{fig:medical_premise_generator}
\end{figure}
\begin{figure}
    \centering
    \input{Prompts/Medical_Answerer}
    \caption{\textbf{Prompt for the answerer}: rewrites the current utterance under a specific assumption about an ambiguous span.}
    \label{fig:medical_answerer}
\end{figure}
\begin{figure}
    \centering
    \input{Prompts/Medical_Clusterer}
    \caption{\textbf{Prompt for the Clusterer}: groups rewritten utterances by shared clinical or semantic meaning.}
    \label{fig:medical_cluster}
\end{figure}
\clearpage

\end{document}